\definecolor{mygray}{gray}{.9}
\newcommand{\bftab}{\fontseries{b}\selectfont}
\newtheorem{theorem}{Theorem}[section]
\crefname{section}{Sec.}{Secs.}
\Crefname{section}{Section}{Sections}
\Crefname{table}{Table}{Tables}
\crefname{table}{Tab.}{Tabs.}
\newcommand{\setParDis}{\setlength {\parskip} {0.15cm} }
\newcommand{\setParDef}{\setlength {\parskip} {0pt} }
\begin{document}

\title{OpenMix: Exploring Outlier Samples for Misclassification Detection}

\author{Fei Zhu$^{1,2}$, 
	Zhen Cheng$^{1,2}$, Xu-Yao Zhang$^{1,2}$\thanks{Corresponding author.}~, Cheng-Lin Liu$^{1,2}$\\
	$^1$MAIS, Institute of Automation, Chinese Academy of Sciences, Beijing 100190, China\\
	$^2$School of Artificial Intelligence, University of Chinese Academy of Sciences, Beijing, 100049, China\\
	{\tt\small \{zhufei2018, chengzhen2019\}@ia.ac.cn, \{xyz, liucl\}@nlpr.ia.ac.cn}
}
\maketitle

\begin{abstract}
   Reliable confidence estimation for deep neural classifiers is a challenging yet fundamental requirement in high-stakes applications. Unfortunately, modern deep neural networks are often overconfident for their erroneous predictions. In this work, we exploit the easily available outlier samples, i.e., unlabeled samples coming from non-target classes, for helping detect misclassification errors. Particularly, we find that the well-known Outlier Exposure, which is powerful in detecting out-of-distribution (OOD) samples from unknown classes, does not provide any gain in identifying misclassification errors. Based on these observations, we propose a novel method called OpenMix, which incorporates open-world knowledge by learning to reject uncertain pseudo-samples generated via outlier transformation. OpenMix significantly improves confidence reliability under various scenarios, establishing a strong and unified framework for detecting both misclassified samples from known classes and OOD samples from unknown classes. The code is publicly
   available at \url{https://github.com/Impression2805/OpenMix}.
\end{abstract}

\section{Introduction}
\label{sec:intro}
Human beings inevitably make mistakes, so do machine learning systems. Wrong predictions or decisions can cause various problems and harms, from financial loss to injury and death. Therefore, in risk-sensitive applications such as clinical decision making \cite{esteva2017dermatologist} and autonomous driving \cite{janai2017computer, zhang2023survey}, it is important to provide reliable confidence to avoid using wrong predictions, in particular for non-specialists who may trust the computational models without further checks. For instance, a disease diagnosis model should hand over the input to human experts when the prediction confidence is low. However, though deep neural networks (DNNs) have enabled breakthroughs in many fields, they are known to be overconfident for their erroneous predictions \cite{zhang2020towards, hendrycks2017baseline}, \emph{i.e.}, assigning high confidence for \ding{172} misclassified samples from in-distribution (ID) and \ding{173} out-of-distribution (OOD) samples from unknown classes. 

In recent years, many efforts have been made to enhance the OOD detection ability of DNNs \cite{hendrycks2019deep, hendrycks2019anomalyseg, liu2020energy, fort2021exploring, bitterwolf2022breaking, dong2022neural}, while little attention has been
paid to detecting misclassified errors from known classes. Compared with the widely studied OOD detection problem, misclassification detection (MisD) is more challenging because DNNs are typically more confident for the misclassified ID samples than that for OOD data from a different distribution \cite{granese2021doctor}. In this paper, we focus on the under-explored MisD, and propose a simple approach to help decide whether a prediction is likely to be misclassified, and therefore should be rejected.

\begin{figure}[t]
	\begin{center}
		\vskip 0.03 in
		\centerline{\includegraphics[width=1.04\columnwidth]{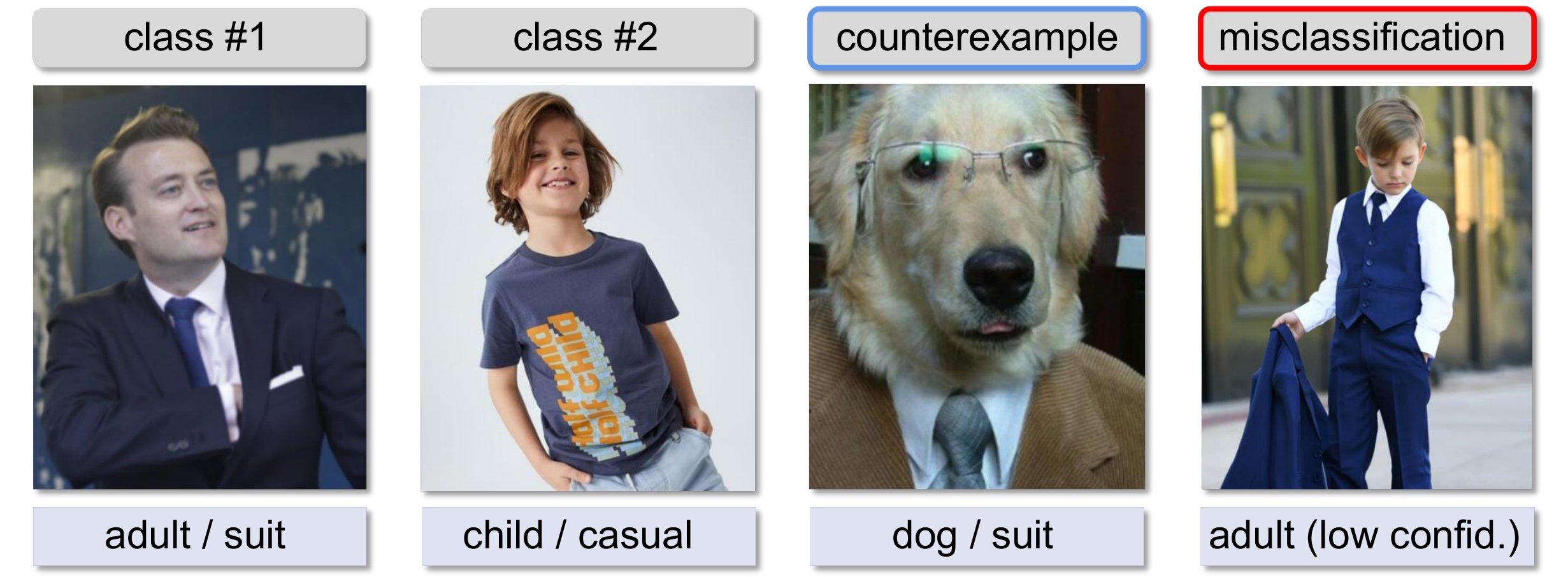}}
		\vskip -0.05 in
		\caption{Illustration of advantages of counterexample data for reliable confidence estimation. The misclassified image has the most determinative and shortcut \cite{geirhos2020shortcut} features from class \#1 (\emph{i.e.}, suit). Counterexample teaches the model the knowledge of \emph{what is not adult even if it has suit}, which could help reduce model's confidence on wrong predictions.}
		\label{figure-1}
	\end{center}
	\vskip -0.4 in
\end{figure}
Towards developing reliable models for detecting misclassification errors, we start by asking a natural question:\\ 
\centerline{\textit{Why are human beings good at confidence estimation?}}
 A crucial point is that humans learn and predict in context, where we have abundant prior knowledge about other entities in the open world. According to \emph{mental models} \cite{johnson2010mental, verschueren2005everyday, de2005working} in cognitive science, when assessing the validity or evidence of a prediction, one would retrieve counterexamples, \emph{i.e.}, which satisfy the premise but cannot lead to the conclusion. In other words, exploring counterexamples from open world plays an important role in establishing reliable confidence for the reasoning problem. Inspired by this, we attempt to equip DNNs with the above ability so that they can reduce confidence for incorrect predictions. Specifically, we propose to leverage outlier data, \emph{i.e.}, unlabeled random samples from non-target classes, as counterexamples for overconfidence mitigation. Fig.~\ref{figure-1} presents an intuitive example to illustrate the advantages of outlier samples for reducing the confidence of misclassification. 

To leverage outlier samples for MisD, we investigate the well-known
Outlier Exposure (OE) \cite{hendrycks2019deep} as it is extremely popular and can achieve state-of-the-art OOD detection performance. 
However, we find that OE is more of a hindrance than a help for identifying misclassified errors. Further comprehensive experiments show that existing popular OOD detection methods can easily ruin the MisD performance. This is undesirable as misclassified errors widely exist in practice, and a model should be able to reliably reject those samples rather than only reject OOD samples from new classes.
We observe that the primary reason for the poor MisD performance of OE and other OOD methods is that: they often compress the confidence region of ID samples in order to distinguish them from OOD samples. Therefore, it becomes difficult for the model to further distinguish
correct samples from misclassified ones. 

We propose a \emph{learning to reject} framework to leverage outlier data. \ding{172} Firstly, unlike OE and its variants which force the model to output a uniform distribution on all training classes for each outlier sample, we explicitly break the closed-world classifier by adding a separate reject class for outlier samples. \ding{173} To reduce the distribution gap between ID and open-world outlier samples, we mix them via simple linear interpolation and assign soft labels for the mixed samples. We call this method \emph{OpenMix}. Intuitively, the proposed OpenMix can introduce
the prior knowledge about \emph{what is uncertain and should be
assigned low confidence}. We provide proper justifications and show that OpenMix can significantly improve the MisD performance. We would like to highlight that our approach is simple, agnostic to the network architecture, and does not degrade accuracy when improving confidence reliability.

In summary, our primary contributions are as follows:
\begin{itemize}
	\item For the first time, we propose to explore the effectiveness of outlier samples for detecting misclassification errors. We find that OE and other OOD methods are useless or harmful for MisD. 
	\item We propose a simple yet effective method named OpenMix, which can significantly improve MisD performance with enlarged confidence separability between correct and misclassified samples. 
	\item Extensive experiments demonstrate that OpenMix significantly and consistently improves MisD. Besides, it also yields strong OOD detection performance, serving as a unified failure detection method. 
\end{itemize}

\section{Related Work}
\noindent\textbf{Misclassification detection.} Chow \cite{chow1970optimum} presented an optimal rejection rule for Bayes classifier. For DNNs, a common baseline of MisD is the maximum softmax probability (MSP) score \cite{hendrycks2017baseline}. Some works \cite{corbiere2019addressing, luo2021learning} introduce a separate confidence network to perform binary discrimination between correct and misclassified training samples. One clear drawback of those methods is that DNNs often have high training accuracy where few or even no misclassified examples exist in the training set. Moon \emph{et al.} \cite{MoonKSH20} proposed to learn an ordinal ranking relationship according to confidence for reflecting the historical correct rate during training dynamics. A recent work \cite{zhu2022rethinking} demonstrates that calibration methods \cite{guo2017calibration, MukhotiKSGTD20, MullerKH19, thulasidasan2019mixup} are harmful for MisD, and then reveals a surprising and intriguing phenomenon termed as \textbf{\emph{reliable overfitting}}: the model starts to irreversibly lose confidence reliability after training for a period, even the test accuracy continually increases. To improve MisD, a simple approach, \emph{i.e.} FMFP  \cite{zhu2022rethinking} was designed by eliminating the reliable overfitting phenomenon. A concurrent work \cite{zhu2022learning} develops \emph{classAug} for reliable confidence estimation by learning more synthetic classes.

\begin{table*}[t]
	\caption{MisD performance can not be improved with OE. AUROC and FPR95 are percentage. AURC is multiplied by $10^3$.}
	\vskip -0.07in
	\label{table-1}
	\setlength\tabcolsep{7.1pt}
	\centering
	\renewcommand{\arraystretch}{1}
	\scalebox{0.7}{
		\begin{tabular}{lrccccccccccc}
			\toprule
			\multirow{2}{*}{\textbf{Dateset}} &\multirow{2}{*}{\textbf{Method}} & \multicolumn{3}{c}{\textbf{AURC} $\downarrow$} && \multicolumn{3}{c}{\textbf{AUROC} $\uparrow$} && \multicolumn{3}{c}{\textbf{FPR95} $\downarrow$}\\
			\cmidrule(lr){3-5} \cmidrule(lr){7-9} \cmidrule(lr){11-13}
			& &ResNet110 & WRNet & DenseNet  && ResNet110 & WRNet & DenseNet  && ResNet110 & WRNet & DenseNet \\
			\midrule
			\multirow{2}{*}{CIFAR-10} 
			&MSP \cite{hendrycks2017baseline} &\bftab{9.52$\pm$0.49} &\bftab{4.76$\pm$0.62}  &\bftab{5.66$\pm$0.45} && \bftab{90.13$\pm$0.46} &\bftab{93.14$\pm$0.38} &\bftab{93.14$\pm$0.65}  &&\bftab{43.33$\pm$0.59} & \bftab{30.15$\pm$1.98}  & \bftab{38.64$\pm$4.70}\\
			&+ OE \cite{hendrycks2019deep} & 10.10$\pm$0.54 &4.83$\pm$0.13  &8.23$\pm$0.95 && 90.02$\pm$0.36 & 93.09$\pm$0.15 &91.44$\pm$0.15 &&46.89$\pm$1.78 &38.78$\pm$2.59  &45.86$\pm$2.30 \\
			\midrule
			\multirow{2}{*}{CIFAR-100} 
			&MSP \cite{hendrycks2017baseline}  &\bftab{89.05$\pm$1.39} &\bftab{46.84$\pm$0.90}  &\bftab{66.11$\pm$1.56} &&\bftab{84.91$\pm$0.13}  &\bftab{88.50$\pm$0.44} &\bftab{86.20$\pm$0.04} &&\bftab{65.65$\pm$1.72} &\bftab{56.64$\pm$1.33}  &\bftab{62.79$\pm$0.83} \\
			&+ OE \cite{hendrycks2019deep} &103.06$\pm$2.50 &58.05$\pm$1.21  &86.96$\pm$2.27 &&83.81$\pm$0.49 &86.36$\pm$0.20  &84.25$\pm$0.50  &&71.11$\pm$0.77 &62.96$\pm$0.38  &70.39$\pm$0.65  \\
			\bottomrule
		\end{tabular}
	}
	\vskip -0.1 in
\end{table*}

\setParDis
\noindent\textbf{Utilizing outlier samples.} Auxiliary outlier dataset is commonly utilized in many problem settings. For example, Lee \emph{et al.} \cite{lee2020removing} leveraged outliers to enhance adversarial robustness of DNNs. Park \emph{et al.} \cite{park2021task} used outliers to improve object localization performance. ODNL \cite{wei2021open} uses open-set outliers to prevent the model from over-fitting inherent noisy labels. In addition, outlier samples are also effective for improving few-shot learning \cite{le2021poodle} and long-tailed classification \cite{wei2022open}. In the area of confidence estimation, OE \cite{hendrycks2019deep} has been the most popular and effective way to improve OOD detection ability by using outlier samples. 

\noindent\textbf{OOD detection.} This task focuses on judging whether an input sample is from novel classes or training classes. Compared with MisD, OOD detection has been studied extensively in recent years and various methods have been developed, including training-time \cite{hendrycks2019deep, wei2022logitnorm, techapanurak2020hyperparameter, cao2022deep, wang2022partial} and post-hoc strategies \cite{LiangLS18, lee2018simple, liu2020energy, hendrycks2019anomalyseg, dong2022neural}. Cheng \emph{et al.} \cite{cheng2023average} proposed a AoP (Average of Pruning) framework to improve the performance and stability of OOD detection, which also offers notable gain for MisD. Most existing OOD detection works do not involve detecting misclassified errors. We would like to highlight that both OOD and misclassified samples are failure sources and should be rejected together.
\setParDef

\section{Problem Setting and Motivation}
\subsection{Preliminaries: MisD and OE}
\noindent\textbf{Basic notations.} 
Let $\mathcal{X} \in \mathbb{R}^d$ denote the input space and $\mathcal{Y} = \{1,2,...,k\}$ represents the label space. 
Given a sample ($\bm{x}, ~y$) drawn from an unknown distribution $\mathcal{P}$ on $\mathcal{X} \times \mathcal{Y}$, a neural
network classifier $f(\cdot): \mathbb{R}^d \rightarrow \Delta_k$ produces a probability distribution for $\bm{x}$ on $k$ classes, where $\Delta_k$ denotes the $k-1$ dimensional simplex. Specifically, $f_i(\bm{x})$ denotes the $i$-th element of
the softmax output vector produced by $f$. Then $\hat{y} =: \arg\max_{y \in \mathcal{Y}} f_y(\bm{x})$ can be returned as the predicted class and the associated probability $\hat{p} =: \max_{y \in \mathcal{Y}} f_y(\bm{x})$ can be viewed as the predicted confidence. 
Denote by $\mathcal{D}_\text{in}$ the distribution over $\mathcal{X}$ of ID data. Besides, we can also have access to some unlabeled outlier samples (\emph{i.e.}, $\mathcal{D}_\text{out}$) coming from outside target classes.
At inference time, most of the inputs are from known classes, and they can be correctly classified or misclassified. We use $\mathcal{D}^{\text{test},\checkmark}_\text{in}$ and $\mathcal{D}^{\text{test},\bm{\times}}_\text{in}$ to represent the distribution of correct and misclassified ID samples, respectively.

\setParDis
\noindent\textbf{Misclassification detection.} MisD, also known as failure prediction \cite{corbiere2019addressing, zhu2022rethinking}, is a critical safeguard for safely deploying machine learning
models in real-world applications. It focuses on detecting and filtering wrong predictions ($\mathcal{D}^{\text{test},\bm{\times}}_\text{in}$) from correct predictions ($\mathcal{D}^{\text{test},\checkmark}_\text{in}$) based on their confidence ranking. Formally, denote $\kappa$ a confidence-rate function (\emph{e.g.}, the MSP or negative entropy) that assesses the degree of confidence of the predictions, with a predefined threshold $\delta \in \mathbb{R}^{+}$, the misclassified samples can be detected based on a decision function $g$ such that for a given input $\bm{x}_i \in \mathcal{X}$:
\begin{equation}
\label{eq1}
g(\bm{x}_i) = \left\{ 
\begin{aligned}
 &\text{correct}~~~~\text{if} ~~\kappa(\bm{x}_i) \ge \delta, \\
 &\text{misclassified}~~~~\text{otherwise}.
\end{aligned}
\right.
\end{equation} 

\noindent\textbf{Outlier Exposure.} OE \cite{hendrycks2019deep} leverages auxiliary outliers to help the model detect
OOD inputs by assigning low confidence for samples in $\mathcal{D}_{\text{out}}$. Specifically, given a model $f$ and the original learning objective
$\ell_\text{CE}$ (\emph{i.e.}, cross-entropy loss), OE minimizes the following objective:
\begin{equation}\label{eq2}
\mathbb{E}_{\mathcal{D}^{\text{train}}_\text{in}} [\ell_\text{CE}(f(\bm{x}),y)] 
+ \lambda~ \mathbb{E}_{\mathcal{D}_\text{out}} [\ell_\text{OE}(f(\widetilde{\bm{x}}))],
\end{equation}
where $\lambda > 0$ is a penalty hyper-parameter, and $\ell_\text{OE}$ is defined by
Kullback-Leibler (KL) divergence to the uniform distribution: $\ell_\text{OE}(f(\bm{x})) = \text{KL}(\mathcal{U}(y) \Arrowvert
 f(\bm{x}))$, in which $\mathcal{U}(\cdot)$ denotes the uniform distribution.
Basically, OE uses the available OOD data $\mathcal{D}_\text{out}$ to represent the real OOD data that would be encountered in open environments. Although the limited samples in $\mathcal{D}_\text{out}$ can not fully reveal the real-world OOD data, OE surprisingly yields strong performance in OOD detection. The strong effectiveness of outliers for improving OOD detection has been verified by many recent works \cite{liu2020energy, liznerski2022exposing}. This leads us to ask: \emph{Can we use outlier data to help detect misclassification errors?}\setParDef

\begin{figure}[h]
	\begin{center}
		\centerline{\includegraphics[width=\columnwidth]{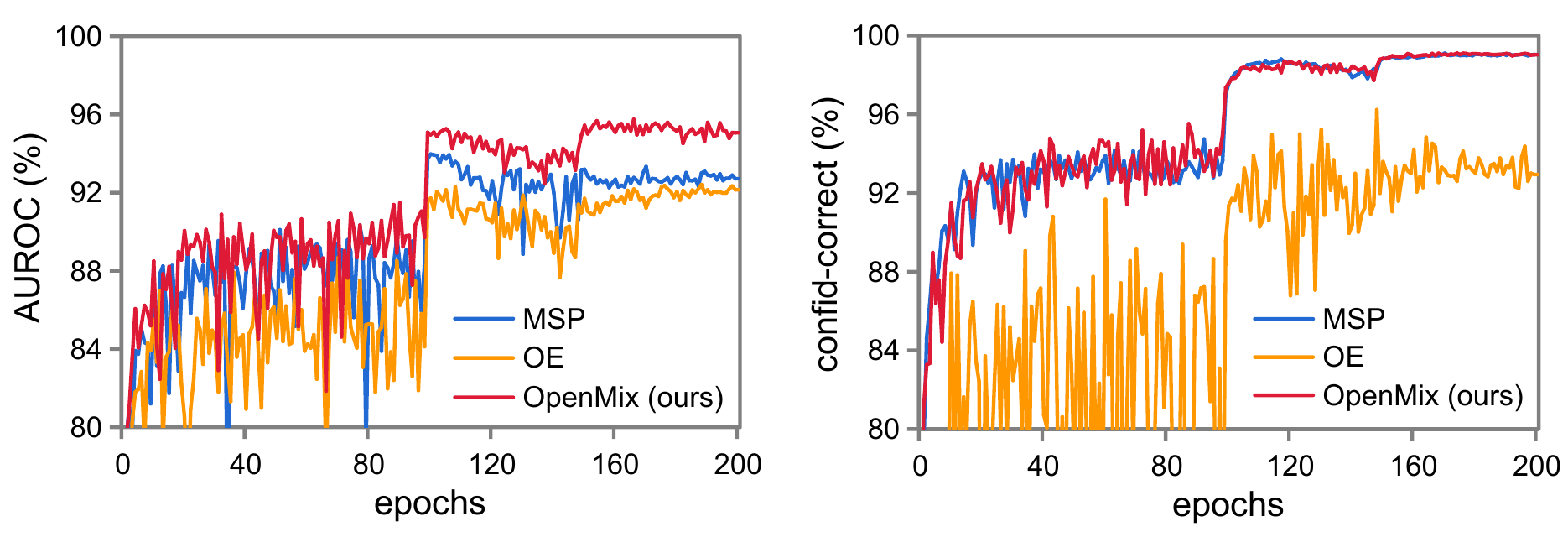}}
		\vskip -0.1 in
		\caption{The AUROC and averaged confidence of correct samples under different training epochs. OE results in (left) worse AUROC with (right) under-confident correctly classified samples.}
		\label{figure-2}
	\end{center}
	\vskip -0.35 in
\end{figure}
\subsection{Motivation: understanding the effect of OE}
\noindent We start with the empirical experiments of OE, analyzing the role of outlier data for MisD. Throughout this subsection, we perform experiments on CIFAR \cite{krizhevsky2009learning} using
standard cross-entropy loss and OE based training, respectively. We use 300K {\ttfamily RandImages} as the OOD auxiliary
dataset following \cite{hendrycks2019deep, wei2021open, wei2022open}. Specifically, all images that
belong to CIFAR classes are removed in {\ttfamily RandImages} so that $\mathcal{D}_\text{in}$ and $\mathcal{D}_\text{out}$ are disjoint. 
Evaluation metrics include AURC $\downarrow$ \cite{GeifmanE17}, FPR95 $\downarrow$ and AUROC $\uparrow$ \cite{davis2006relationship}.

\setParDis
\noindent\textbf{OE has negative impact on MisD.} Table~\ref{table-1} presents the results of training
without/with the auxiliary outlier dataset. We can observe that OE consistently deteriorates the MisD performance under various metrics. For example, when training with OE on CIFAR-10/WRNet, the FPR95$\downarrow$ increases $8.63$ percentages compared with baseline, \emph{i.e.}, MSP. In Fig.~\ref{figure-2} (left), we can observe that the AUROC of OE is consistently lower than that of baseline method during training of WRNet on CIFAR-10. Intuitively, to distinguish correct predictions from errors, the model should assign high confidence for correct samples, and low confidence for errors. However, in Fig.~\ref{figure-2} (right), we find that OE can significantly deteriorate the confidence of correct samples, which makes it difficult to separate correct and wrong predictions.

\noindent\textbf{Understanding from feature space uniformity.} Overconfidence for misclassified prediction implies that the sample is projected into the density region of a wrong class \cite{zhu2022rethinking}. Intuitively, excessive feature compression would lead to over-tight class distribution, increasing the overlap between correct and misclassified samples.
To better understand the negative effect of OE for MisD, we study its impact on the learned deep feature space. 
Let $z(\cdot)$ represent the feature extractor, we then define and compute the inter-class distances $\pi_{inter} =  \frac{1}{Z_{inter}} \sum_{y_{l},y_{k},l \neq k}d(\bm{\mu}(Z_{y_{l}}), \bm{\mu}(Z_{y_k}))$, and average intra-class distances $\pi_{intra} =  \frac{1}{Z_{intra}} \sum_{y_{l} \in y} \sum_{\bm{z}_{i},\bm{z}_{j} \in Z_{y_{l}},i \neq j}d(\bm{z}_{i}, \bm{z}_{j})$, in which $d(\cdot;\cdot)$ is the distance function. $Z_{y_{l}}=\{\bm{z}_{i}:=z(\bm{x}_{i})|y_{i}=y_{l}\}$ denotes the set of deep feature vectors of samples in class $y_{l}$. $\bm{\mu}(Z_{y_{l}})$ is the class mean. $Z_{intra}$ and $Z_{inter}$ are two normalization constants. Finally, the feature space uniformity (FSU) is defined as $\pi_{fsu} = \pi_{intra}/\pi_{inter}$  \cite{roth2020revisiting}. Intuitively, large FSU increases the instances in low density regions and encourages the learned features to distribute uniformly (maximal-info preserving) in feature space. \setParDef

When facing OOD samples from new classes, small FSU (larger inter-class distance and small intra-class distance) could result in less overlap between ID and OOD samples. However, compared to OOD data, misclassified samples are ID and distributed much closer to correct samples of each class. As shown in Fig.~\ref{figure-3}, the FSU is reduced with OE. By forcing the outliers to be uniformly distributed over original classes, OE introduces similar effect as label-smoothing \cite{MullerKH19}, which leads to over-compressed distributions, losing the important information about the hardness of samples. 
Consequently, ID samples of each class would be distributed within a compact and over-tight region, making it harder to separate misclassified samples from correct ones. 
Supp.M provides a unified view on the connection between FSU and OOD detection, MisD performance.
\begin{figure}[h]
	\begin{center}
		\vskip -0.05 in
		\centerline{\includegraphics[width=1.07\columnwidth]{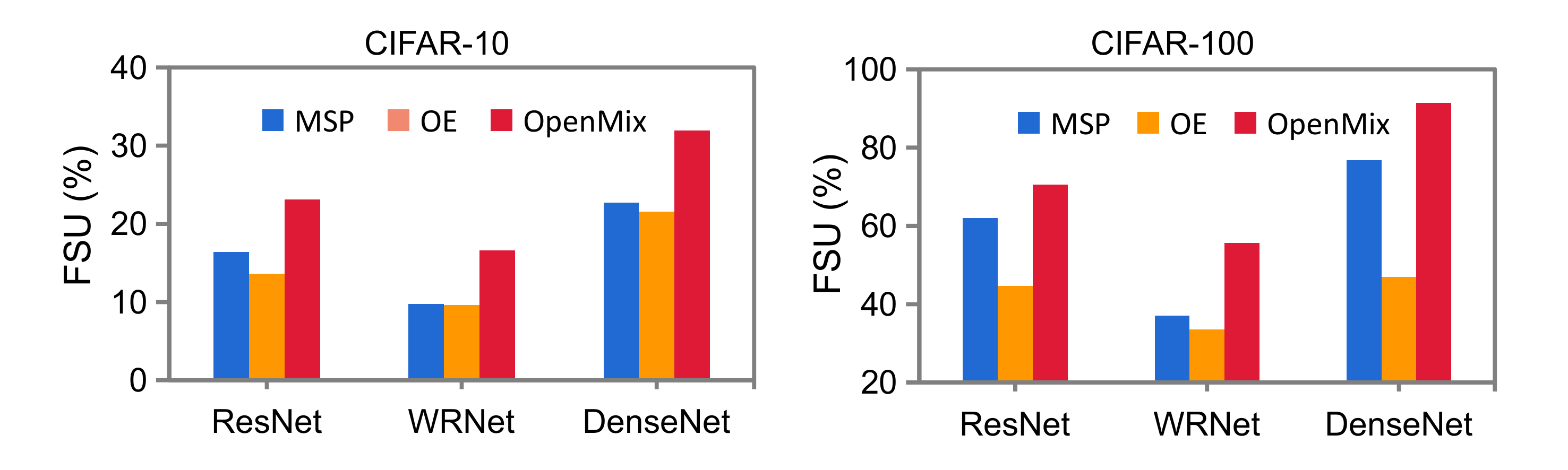}}
		\vskip -0.1 in
		\caption{The impact of OE on the deep feature space. With OE, the feature space uniformity (FSU) is reduced, which indicates excessive feature compression and has negative influence for MisD. Our OpenMix leads to less compact feature distributions.}
		\label{figure-3}
	\end{center}
	\vskip -0.35 in
\end{figure}

\setParDis\noindent\textbf{How to use outliers for MisD?} Based on the above observations and analysis, we argue that the original OE \cite{hendrycks2019deep} should be modified from two aspects for MisD:\setParDef
\begin{itemize}
	\item \emph{On learning objective.} Simply forcing the model to yield uniform distribution for outliers with $\ell_\text{OE}$ would lead to reduced feature space uniformity and worse MisD performance. We suggest that the original $\ell_\text{OE}$ loss should be discarded, and a new learning objective to use outliers should be designed.
	\item \emph{On outlier data.} Outliers from unknown classes are OOD samples and have a large distribution gap with ID misclassified samples, which could weaken the effect for MisD. To overcome this issue, we suggest transforming available outlier data into new outliers that are distributed closer to ID misclassified samples.
\end{itemize}
Motivated by the above observations and analysis, we propose to modify OE from the perspective of learning objective and outlier data, respectively.

\section{Proposed Method: OpenMix}
\begin{figure}[t]
	\begin{center}
		\centerline{\includegraphics[width=\columnwidth]{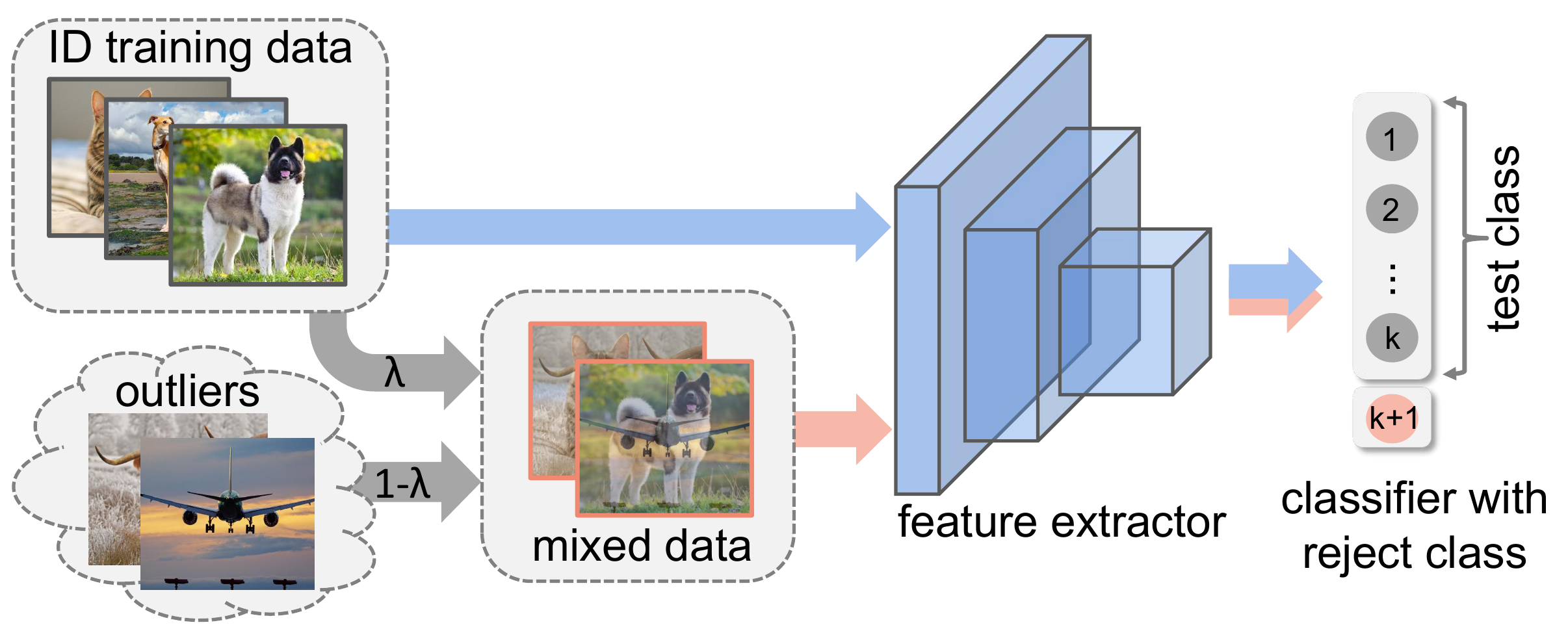}}
		\vskip -0.1 in
		\caption{The pipeline of OpenMix.}
		\label{figure-4}
	\end{center}
	\vskip -0.35 in
\end{figure}
\noindent\textbf{Learning with reject class.} Different from OE that forces the model to output uniform distribution, we propose to predict the outliers as an additional reject class. Specifically, for a $k$-class classification problem, we extend the label space by explicitly adding a separate class for outlier samples. Formally, denote $\mathbb I^{y_i}:=(0,...,1,..,0)^\top \in \{0,1\}^{k+1}$ is a one-hot vector and only the $y_i$-th entry is 1. For the outlier dataset, we map the samples to the $(k+1)$-class. The learning objective is:
\begin{equation}\label{eq3}
\mathcal{L}_{\text{total}} = \mathbb{E}_{\mathcal{D}^{\text{train}}_\text{in}} [\ell(f(\bm{x}),y)] + \gamma \mathbb{E}_{\mathcal{D}_\text{out}} [\ell(f(\widetilde{\bm{x}}),\widetilde{y})],
\end{equation}
where $\widetilde{y}=k+1$ and $\gamma$ denotes a hyper-parameter.
With reject class, the negative effect of outliers for MisD could be alleviated. However, there is little performance gain compared with baseline method, as will be shown in Sec. 5.2. Intuitively, the best auxiliary samples are the misclassified examples. However, OOD outliers can not represent misclassified ID samples well due to the distribution gap. 

\setParDis\noindent\textbf{Outlier transformation via Mixup.}
The distribution gap existing between misclassified ID samples and the OOD outliers significantly limits the effectiveness of learning with reject class. To address this issue, we propose a simple yet powerful strategy to shrink the gap by transforming the original outliers to be near the ID distribution. Specifically, inspired by the well-known Mixup technique \cite{zhang2018mixup}, we perform simple linear interpolation between ID training samples and OOD outliers. Formally, Given a pair of examples $(\bm{x}, y)$ and $(\widetilde{\bm{x}}, \widetilde{y})$ respectively sampled from the ID training set and outlier data, we apply linear interpolation to produce transformed outlier $(\breve{\bm{x}}, \breve{y})$ as follows:
\begin{equation}
\label{eq4}
\begin{aligned}
\breve{\bm{x}} = \lambda \bm{x} + (1-\lambda) \widetilde{\bm{x}}, ~~~
\mathbb I^{\breve{y}} = \lambda \mathbb I^{y} + (1-\lambda) \mathbb I^{\widetilde{y}}.
\end{aligned}
\end{equation}
The $\lambda \in [0,1]$ is a parameter sampled as $\lambda\sim\text{Beta}(\alpha, \alpha)$ for $\alpha \in (0, \infty)$. $y \in \{1,...,k\}$, $\widetilde{y}=k+1$ and $\mathbb I^{\breve{y}}$ denotes the one-hot label. Compared with Mixup \cite{zhang2018mixup}, our method involves outliers and makes sure that one of the interpolated labels always belongs to the added class, \emph{i.e.}, the ($k+1$)-th class. As shown in Sec. 5.2, other interpolation strategies like CutMix \cite{yun2019cutmix} and Manifold Mixup \cite{verma2019manifold} can also be used.

\noindent\textbf{Final learning objective.}
Combining reject class with outlier transformation, the final training objective of our OpenMix is as follows:
\begin{equation}
\label{eq5}
\begin{aligned}
\mathcal{L}_{\text{total}} = \mathbb{E}_{\mathcal{D}^{\text{train}}_\text{in}} [\ell(f(\bm{x}),y)] + \gamma\mathbb{E}_{\mathcal{D}^{\text{mix}}_\text{out}} [\ell(f(\breve{\bm{x}}),\breve{y})] \\
=\mathbb{E}_{\mathcal{D}^{\text{train}}_\text{in}} [-\mathbb I^{y}~\text{log}~f(\bm{x})] + \gamma\mathbb{E}_{\mathcal{D}^{\text{mix}}_\text{out}} [-\mathbb I^{\breve{y}}~\text{log}~f(\breve{\bm{x}})].
\end{aligned}
\end{equation}
In practice, we do not produce all mixed samples beforehand, but apply the outlier transformation in each mini-batch during training like Mixup. The details of OpenMix are provided in Algorithm~1, and Fig.~\ref{figure-4} illustrates the overall framework of OpenMix.

\noindent\textbf{Inference.} Our method focuses on detecting misclassified samples from known classes. Therefore, only the original $k$ classes are evaluated in test phase. Specifically, the predicted label of an input $\hat{y} =: \arg\max_{y \in \mathcal{Y}} f_y(\bm{x})$ and the corresponding confidence is the common MSP score, \emph{i.e.}, $\hat{p} =: \max_{y \in \mathcal{Y}} f_y(\bm{x})$, in which $\mathcal{Y} = \{1,2,...,k\}$.
\begin{algorithm}[t]
	\caption{OpenMix for MisD}
	\KwIn{Training dataset $\mathcal{D}^{\text{train}}_\text{in}$. Outlier dataset $\mathcal{D}_\text{out}$.}
	
	\For{each iteration}{ 
		Sample a mini-batch of ID training data $\{(\bm{x}_i, y_i)\}_{i=1}^{n}$ from $\mathcal{D}^{\text{train}}_\text{in}$;\\
		Sample a mini-batch of OOD outlier data $\{\widetilde{\bm{x}}_i\}_{i=1}^{n}$ from $\mathcal{D}_\text{out}$;\\
		Generate transformed outlier data $\{(\breve{\bm{x}}_i, \breve{y}_i)\}_{i=1 }^{n}$ based on Eq.~\ref{eq4};\\
		Perform common gradient descent on $f$ with $\mathcal{L}_{\text{total}}$ based on Eq.~\ref{eq5};
	}
\end{algorithm}

\noindent\textbf{Why OpenMix is beneficial for MisD?} Here we provide an interpretation: \textbf{\emph{OpenMix increases the exposure of low density regions}}. In standard training, it is difficult for reliable confidence learning because the low density regions (uncertain regions) are often under-explored, where few data points are mapped to those regions. This is expected as cross-entropy loss forces all samples to be correctly classified by matching their probability distributions with one-hot labels. As a result, the low density regions with rich uncertainty are largely ignored, leading to overconfidence for incorrect predictions. With OpenMix, the samples synthesized via outlier transformation, \emph{i.e.}, mixup of the outlier and ID regions, could reflect the property of low density regions, and soft labels teach the model to be uncertain for those samples. The results in Fig.~\ref{figure-3} confirm that OpenMix can effectively enlarge the FSU with increased exposure of low density regions.
Besides, by keeping one of the classes in soft labels always belonging to the $(k+1)$ class, OpenMix can keep the confidence of correct samples over original $k$ classes, as shown in Fig.~\ref{figure-2} (right). Supp.M provides a theoretical justification showing that OpenMix increases the exposure of low density regions.

\setParDef\section{Experiments}
\noindent\textbf{Datasets and networks.} We conduct a thorough empirical evaluation on benchmark datasets CIFAR-10 and CIFAR-100 \cite{krizhevsky2009learning}. For network architectures, we consider a wide range of DNNs such as ResNet110 \cite{HeZRS16}, WideResNet \cite{zagoruyko2016wide} and DenseNet \cite{HuangLMW17}. We use 300K {\ttfamily RandImages} \cite{hendrycks2019deep} as the auxiliary outlier data and more discussions on the different choices of outlier datasets are presented in Sec. 5.2. Besides, the results of large-scale experiments on ImageNet \cite{deng2009imagenet} with ResNet-50 \cite{he2016deep} are also reported.

\begin{table*}[t]
	\caption{Mean and standard deviations of MisD performance on CIFAR benchmarks. The experimental results are reported over three trials. The best mean results are bolded. AUROC, FPR95 and Accuracy are percentages. AURC is multiplied by $10^3$.}
	\vskip -0.07in
	\label{table-2}
	\setlength\tabcolsep{6pt}
	\centering
	\renewcommand{\arraystretch}{1}
	\scalebox{0.67}{
		\begin{tabular}{llccccccccc}
			\toprule
			\multirow{2}{*}{\textbf{Network}} &\multirow{2}{*}{\textbf{Method}} & \multicolumn{4}{c}{\textbf{CIFAR-10}} && \multicolumn{4}{c}{\textbf{CIFAR-100}}\\
			\cmidrule(lr){3-6} \cmidrule(lr){8-11}
			& &\textbf{AURC} $\downarrow$ & \textbf{AUROC} $\uparrow$ & \textbf{FPR95} $\downarrow$  &\textbf{ACC} $\uparrow$  &&\textbf{AURC} $\downarrow$ & \textbf{AUROC} $\uparrow$ & \textbf{FPR95} $\downarrow$  &\textbf{ACC} $\uparrow$ \\
			\midrule
			\multirow{9}{*}{ResNet110} 
			&MSP~{\scriptsize\textcolor{darkgray}{[ICLR17]}} \cite{hendrycks2017baseline}  &9.52$\pm$0.49 &90.13$\pm$0.46 & 43.33$\pm$0.59 & 94.30$\pm$0.06 &&89.05$\pm$1.39 &84.91$\pm$0.13 &65.65$\pm$1.72 &73.30$\pm$0.25\\
			&Doctor~{\scriptsize\textcolor{darkgray}{[NeurIPS21]}} \cite{granese2021doctor}  &9.51$\pm$0.49 &90.15$\pm0.44$  &42.95$\pm0.78$ & 94.30$\pm$0.06 &&89.84$\pm1.12$  &84.94$\pm0.09$ &64.75$\pm1.37$ &73.30$\pm$0.25 \\
			&ODIN~{\scriptsize\textcolor{darkgray}{[ICLR18]}} \cite{LiangLS18}  &20.82$\pm$1.09 & 79.45$\pm$0.75 & 59.32$\pm$1.08 & 94.30$\pm$0.06 &&167.53$\pm$9.93 & 68.95$\pm$1.95 & 79.64$\pm$1.43 & 73.30$\pm$0.25 \\
			&Energy~{\scriptsize\textcolor{darkgray}{[NeurIPS20]}} \cite{liu2020energy} &15.13$\pm$0.85 & 84.72$\pm$0.80 & 53.89$\pm$0.65  & 94.30$\pm$0.06 &&128.66$\pm$5.05 & 76.80$\pm$1.07 & 73.54$\pm$0.73 & 73.30$\pm$0.25\\
			&MaxLogit~{\scriptsize\textcolor{darkgray}{[ICML22]}} \cite{hendrycks2019anomalyseg}  &14.93$\pm$0.87 & 85.00$\pm$0.80 & 53.01$\pm$1.13 & 94.30$\pm$0.06 &&125.38$\pm$4.54 & 77.73$\pm$0.96 & 70.61$\pm$0.70 & 73.30$\pm$0.25\\
			&LogitNorm~{\scriptsize\textcolor{darkgray}{[ICML22]}} \cite{wei2022logitnorm}  & 12.57$\pm$1.32 & 88.82$\pm$0.84 & 56.27$\pm$2.61 & 92.64$\pm$0.23 && 118.00$\pm$3.17 & 79.56$\pm$0.16 & 73.09$\pm$0.18 & 71.68$\pm$0.34\\
			&Mixup~{\scriptsize\textcolor{darkgray}{[NeurIPS18]}} \cite{zhang2018mixup} & 16.27$\pm$1.33 & 86.21$\pm$0.83 &40.71$\pm$0.88 &94.69$\pm$0.31 && 87.39$\pm$1.83 & 84.60$\pm$0.88 &64.95$\pm$3.28 &75.08$\pm$0.30\\
			&RegMixup~{\scriptsize\textcolor{darkgray}{[NeurIPS22]}} \cite{pinto2022regmixup}  &7.88$\pm$0.64 &89.40$\pm$0.64  &50.91$\pm$1.47 &\bftab{95.10$\pm$0.23} &&75.76$\pm$2.00 &84.80$\pm$0.48 &64.75$\pm$1.16 &\bftab{76.15$\pm$0.14}\\
			\cmidrule(lr){2-11}
			&OpenMix (ours) &\bftab{6.31$\pm$0.32} &\bftab{92.09$\pm$0.36}  &\bftab{39.63$\pm$2.36} &94.98$\pm$0.20 &&\bftab{73.84$\pm$1.31} &\bftab{85.83$\pm$0.22} &\bftab{64.22$\pm$1.35} &75.77$\pm$0.35\\
			\midrule
			\midrule
			\multirow{9}{*}{WRNet} 
			&MSP~{\scriptsize\textcolor{darkgray}{[ICLR17]}} \cite{hendrycks2017baseline}  &4.76$\pm$0.62 &93.14$\pm$0.38 &30.15$\pm$1.98 &95.91$\pm$0.07 &&46.84$\pm$0.90  &88.50$\pm$0.44 & 56.64$\pm$1.33 & 80.76$\pm$0.18\\
			&Doctor~{\scriptsize\textcolor{darkgray}{[NeurIPS21]}} \cite{granese2021doctor}  &4.75$\pm$0.61 &93.13$\pm0.38$  &30.46$\pm1.90$ &95.91$\pm$0.07 &&47.34$\pm1.31$  &88.41$\pm0.23$ &57.64$\pm0.64$ & 80.76$\pm$0.18\\
			&ODIN~{\scriptsize\textcolor{darkgray}{[ICLR18]}} \cite{LiangLS18} &20.37$\pm$3.36  & 74.70$\pm$2.67  & 62.04$\pm$2.86 & 95.91$\pm$0.07 &&72.58$\pm$0.69 & 81.02$\pm$0.37 & 65.22$\pm$0.53 & 80.76$\pm$0.18\\
			&Energy~{\scriptsize\textcolor{darkgray}{[NeurIPS20]}} \cite{liu2020energy}  &6.91$\pm$0.66 & 90.47$\pm$0.51 & 39.13$\pm$2.07 & 95.91$\pm$0.07&&57.30$\pm$1.24  & 85.05$\pm$0.34 & 64.15$\pm$0.26 & 80.76$\pm$0.18\\
			&MaxLogit~{\scriptsize\textcolor{darkgray}{[ICML22]}} \cite{hendrycks2019anomalyseg} &6.85$\pm$0.66 & 90.60$\pm$0.52 & 37.01$\pm$2.38 & 95.91$\pm$0.07 &&56.07$\pm$1.24 & 85.62$\pm$0.32 & 61.57$\pm$0.56  & 80.76$\pm$0.18\\
			&LogitNorm~{\scriptsize\textcolor{darkgray}{[ICML22]}} \cite{wei2022logitnorm}  & 5.81$\pm$0.45 & 91.06$\pm$0.26 & 46.06$\pm$2.24 & 95.50$\pm$0.33 && 72.05$\pm$1.32 & 82.23$\pm$0.28 & 66.32$\pm$0.11 & 79.11$\pm$0.09\\
			&Mixup~{\scriptsize\textcolor{darkgray}{[NeurIPS18]}} \cite{zhang2018mixup}  & 5.30$\pm$2.02 & 90.79$\pm$2.64 &29.68$\pm$3.26 &96.71$\pm$0.05 && 46.91$\pm$2.43  & 87.61$\pm$0.46 &56.05$\pm$2.50 &82.51$\pm$0.18\\
			&RegMixup~{\scriptsize\textcolor{darkgray}{[NeurIPS22]}} \cite{pinto2022regmixup}&3.36$\pm$0.27 &92.31$\pm$0.34  &37.48$\pm$4.96 &97.10$\pm$0.14 &&40.36$\pm$1.71 & 88.33$\pm$0.35 &56.44$\pm$0.95  &82.50$\pm$0.30 \\
			\cmidrule(lr){2-11}
			&OpenMix (ours) &\bftab{2.32$\pm$0.15} &\bftab{94.81$\pm$0.34}  &\bftab{22.08$\pm$1.86} &\bftab{97.16$\pm$0.10} &&\bftab{39.61$\pm$0.54} &\bftab{89.06$\pm$0.11}  &\bftab{55.00$\pm$1.29} &\bftab{82.63$\pm$0.06}\\
			\midrule
			\midrule
			\multirow{9}{*}{DenseNet} 
			&MSP~{\scriptsize\textcolor{darkgray}{[ICLR17]}} \cite{hendrycks2017baseline} &5.66$\pm$0.45 &93.14$\pm$0.65 &38.64$\pm$4.70 &94.78$\pm$0.16 &&66.11$\pm$1.56 &86.20$\pm$0.04 &62.79$\pm$0.83 &76.96$\pm$0.20 \\
			&Doctor~{\scriptsize\textcolor{darkgray}{[NeurIPS21]}} \cite{granese2021doctor}  &5.64$\pm$0.45 &93.19$\pm0.63$  &38.29$\pm4.90$ &94.78$\pm$0.16 &&67.45$\pm1.34$  &86.30$\pm0.05$ &63.47$\pm0.34$ &76.96$\pm$0.20 \\
			&ODIN~{\scriptsize\textcolor{darkgray}{[ICLR18]}} \cite{LiangLS18} &15.37$\pm$1.98  & 82.02$\pm$2.22 & 61.77$\pm$3.53 & 94.78$\pm$0.16 &&110.50$\pm$5.09 & 75.71$\pm$0.72 & 76.37$\pm$0.89 & 76.96$\pm$0.20\\
			&Energy~{\scriptsize\textcolor{darkgray}{[NeurIPS20]}}  \cite{liu2020energy}  &8.60$\pm$0.84 & 89.21$\pm$1.18 & 51.31$\pm$2.69 & 94.78$\pm$0.16&&100.13$\pm$3.47 & 78.03$\pm$0.55 & 74.46$\pm$0.65 & 76.96$\pm$0.20\\
			&MaxLogit~{\scriptsize\textcolor{darkgray}{[ICML22]}} \cite{hendrycks2019anomalyseg}  &8.38$\pm$0.81 & 89.57$\pm$1.15  & 48.96$\pm$2.48 & 94.78$\pm$0.16 &&96.69$\pm$3.26 & 79.14$\pm$0.49 & 70.52$\pm$0.57 & 76.96$\pm$0.20\\
			&LogitNorm~{\scriptsize\textcolor{darkgray}{[ICML22]}} \cite{wei2022logitnorm} & 10.89$\pm$0.71 & 88.70$\pm$0.27 & 56.59$\pm$3.07 & 93.59$\pm$0.34 &&  116.35$\pm$3.22 & 78.14$\pm$0.60 & 74.81$\pm$0.89 & 73.13$\pm$0.48\\
			&Mixup~{\scriptsize\textcolor{darkgray}{[NeurIPS18]}} \cite{zhang2018mixup}  & 9.55$\pm$0.19 & 89.87$\pm$0.47 &37.21$\pm$1.09 &94.92$\pm$0.08 &&63.76$\pm$3.28 & 86.09$\pm$0.81 & 63.94$\pm$2.86 &77.82$\pm$0.42 \\
			&RegMixup~{\scriptsize\textcolor{darkgray}{[NeurIPS22]}} \cite{pinto2022regmixup}  &5.20$\pm$0.45 &92.02$\pm$0.95  &41.50$\pm$3.45 &95.50$\pm$0.03 &&55.81$\pm$1.40 &87.14$\pm$0.22 &63.98$\pm$1.36 &78.68$\pm$0.45\\
			\cmidrule(lr){2-11}
			&OpenMix (ours) &\bftab{4.68$\pm$0.72} &\bftab{93.57$\pm$0.81}  &\bftab{33.57$\pm$3.70} &\bftab{95.51$\pm$0.23} &&\bftab{53.83$\pm$0.93} &\bftab{87.45$\pm$0.18} &\bftab{62.22$\pm$1.15}  &\bftab{78.97$\pm$0.31}\\
			\bottomrule
		\end{tabular}
	}
\end{table*}

\begin{table}[t]
	\caption{Comparison with other methods using VGG-16. Results with ``$^{\dagger}$'' are from \cite{corbiere2021confidence}. E-AURC is also reported following \cite{corbiere2021confidence}.}
	\vskip -0.22in
	\label{table-3}
	\begin{center}
		\renewcommand\tabcolsep{5.2pt}
		\begin{small}
			\newcommand{\tabincell}[2]{\begin{tabular}{@{}#1@{}}#2\end{tabular}}
			\scalebox{0.7}{
				\renewcommand{\arraystretch}{0.9}
				\begin{tabular}{lcccc}
					\toprule
					\textbf{Method}&
					\textbf{AURC} $\downarrow$ & \textbf{E-AURC} $\downarrow$&
					\textbf{FPR95} $\downarrow$&\textbf{AUROC} $\uparrow$\\
					\midrule
					& \multicolumn{4}{c}{CIFAR-10}\\
					\cmidrule(lr){2-5}
					MSP~{\scriptsize\textcolor{darkgray}{[ICLR17]}}\cite{hendrycks2017baseline} $^{\dagger}$ &12.66$\pm$0.61 &8.71$\pm$0.50 &49.19$\pm$1.42 &91.18$\pm$0.32\\
					MCDropout~{\scriptsize\textcolor{darkgray}{[ICML16]}}\cite{GalG16}  $^{\dagger}$ &13.31$\pm$2.63 &9.46$\pm$2.41 &49.67$\pm$2.66 &90.70$\pm$1.96\\
					TrustScore~{\scriptsize\textcolor{darkgray}{[NeurIPS18]}}\cite{Jiang2018ToTO}  $^{\dagger}$ &17.97$\pm$0.45 &14.02$\pm$0.34 &54.37$\pm$1.96 &87.87$\pm$0.41 \\
					TCP~{\scriptsize\textcolor{darkgray}{[TPAMI21]}}\cite{corbiere2021confidence} $^{\dagger}$ &11.78$\pm$0.58 &7.88$\pm$0.44 &45.08$\pm$1.58 &92.05$\pm$0.34\\
					SS~{\scriptsize\textcolor{darkgray}{[NeurIPS21]}}\cite{luo2021learning}   &- &-  &44.69 &92.22 \\
					OpenMix (ours) &\bftab{6.31$\pm$0.18} &\bftab{4.41$\pm$0.15} &\bftab{38.48$\pm$1.30} &\bftab{93.56$\pm$0.26} \\
					\midrule
					& \multicolumn{4}{c}{CIFAR-100}\\
					\cmidrule(lr){2-5}
					MSP~{\scriptsize\textcolor{darkgray}{[ICLR17]}}\cite{hendrycks2017baseline}$^{\dagger}$ &113.23$\pm$2.98 &51.93$\pm$1.20 &66.55$\pm$1.56 &85.85$\pm$0.14\\
					MCDropout~{\scriptsize\textcolor{darkgray}{[ICML16]}}\cite{GalG16}$^{\dagger}$ &101.41$\pm$3.45 &46.45$\pm$1.91 &63.25$\pm$0.66 &86.71$\pm$0.30\\
					TrustScore~{\scriptsize\textcolor{darkgray}{[NeurIPS18]}}\cite{Jiang2018ToTO} $^{\dagger}$ &119.41$\pm$2.94 &58.10$\pm$1.09 &71.90$\pm$0.93  &84.41$\pm$0.15\\
					TCP~{\scriptsize\textcolor{darkgray}{[TPAMI21]}}\cite{corbiere2021confidence}$^{\dagger}$ &108.46$\pm$2.62 &47.15$\pm$0.95 &62.70$\pm$1.04 &87.17$\pm$0.21\\
					OpenMix (ours) &\bftab{73.44$\pm$0.65} &\bftab{36.41$\pm$0.45} &\bftab{61.58$\pm$0.94} &\bftab{87.47$\pm$0.12}\\
					\bottomrule
			\end{tabular}}
		\end{small}
	\end{center}
	\vskip -0.35in
\end{table}

\setParDis\noindent\textbf{Training configuration.} All models are trained using SGD with a momentum of 0.9, an initial learning rate of 0.1, and a weight decay of 5e-4 for 200 epochs with the mini-batch size of 128 for CIFAR. The learning rate is reduced by a factor of 10 at 100, and 150 epochs. For experiments on ImageNet, we perform the automatic mixed precision training. Implementation details are provided in Supp.M.

\noindent\textbf{Evaluation metrics.} \ding{172} \textbf{AURC.} The area under the risk-coverage curve (AURC) \cite{GeifmanE17} depicts the error rate computed by using samples whose confidence is higher than some confidence thresholds. \ding{173} \textbf{AUROC.} The area under the receiver
operating characteristic curve (AUROC) \cite{davis2006relationship} depicts the relationship between true positive rate (TPR) and false positive rate (FPR). \ding{174} \textbf{FPR95.} The FPR at 95\% TPR denotes the probability that a misclassified example is predicted as a correct one when the TPR is as high as $95\%$. \ding{175} \textbf{ACC.} Test accuracy (ACC) is also an important metric. \setParDef

\begin{figure}[h]
	\begin{center}
		\vskip -0.1 in
		\centerline{\includegraphics[width=1.02\columnwidth]{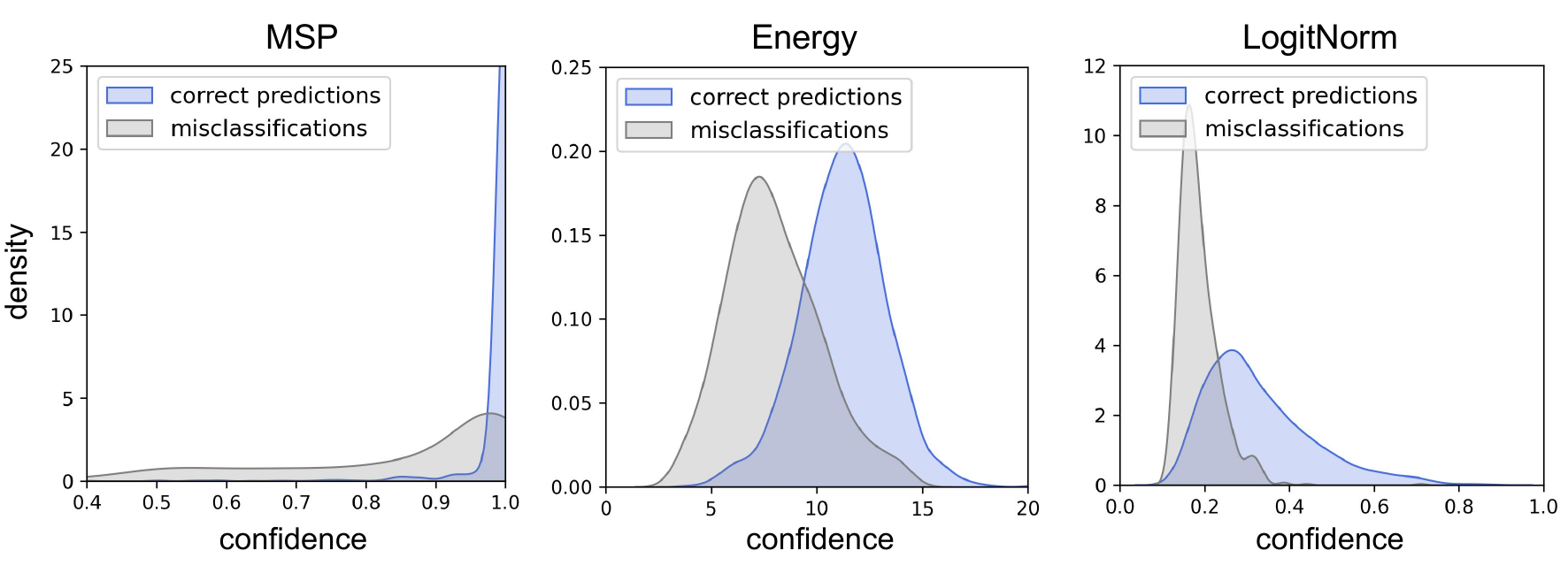}}
		\vskip -0.1 in
		\caption{OOD detection methods lead to worse confidence separation between correct and wrong samples.}
		\label{figure-5}
	\end{center}
	\vskip -0.35 in
\end{figure}
\subsection{Comparative Results}
\noindent\textbf{OOD detection methods failed in detecting misclassification errors.} As shown in Table~\ref{table-2}, we observe that the simple MSP can consistently outperform Energy \cite{liu2020energy}, MaxLogit \cite{hendrycks2019anomalyseg}, ODIN \cite{LiangLS18} and LogitNorm \cite{wei2022logitnorm}, which are strong OOD detection methods. The illustration in Fig.~\ref{figure-5} shows that those methods lead to more overlap between misclassified and correct ID data compared with MSP.
This is surprising and undesirable because in practice both OOD and misclassified samples result in significant loss, and therefore should be rejected and handed over to humans. 
This observation points out an interesting future research direction of developing confidence estimation methods that consider OOD detection and MisD in a unified manner. 

\setParDis \noindent\textbf{OpenMix improves the reliability of confidence.} \ding{172} \emph{Comparison with MSP}. The results in Table~\ref{table-2} show that OpenMix widely outperforms the strong baseline MSP. For instance, compared with MSP, ours successfully reduces the FPR95 from 30.14\% to 22.08\% under the CIFAR-10/WRNet setting. \ding{173} \emph{Comparison with Mixup variants}. We compare OpenMix with the original Mixup \cite{zhang2018mixup} and its recently developed variant RegMixup \cite{pinto2022regmixup}. We can find that they can also be outperformed by OpenMix. \ding{174} \emph{Comparison with TCP and other methods}. Since TCP \cite{corbiere2019addressing} is based on misclassified training samples, it can not be used for models with high training accuracy. Therefore, we make comparison on VGG-16 \cite{SimonyanZ14a}. In Table~\ref{table-3}, OpenMix outperforms TCP, SS \cite{luo2021learning}, MCDropout \cite{GalG16} and TrustScore \cite{Jiang2018ToTO}. 

\begin{figure}[t]
	\begin{center}
		\vskip -0.02 in
		\centerline{\includegraphics[width=\columnwidth]{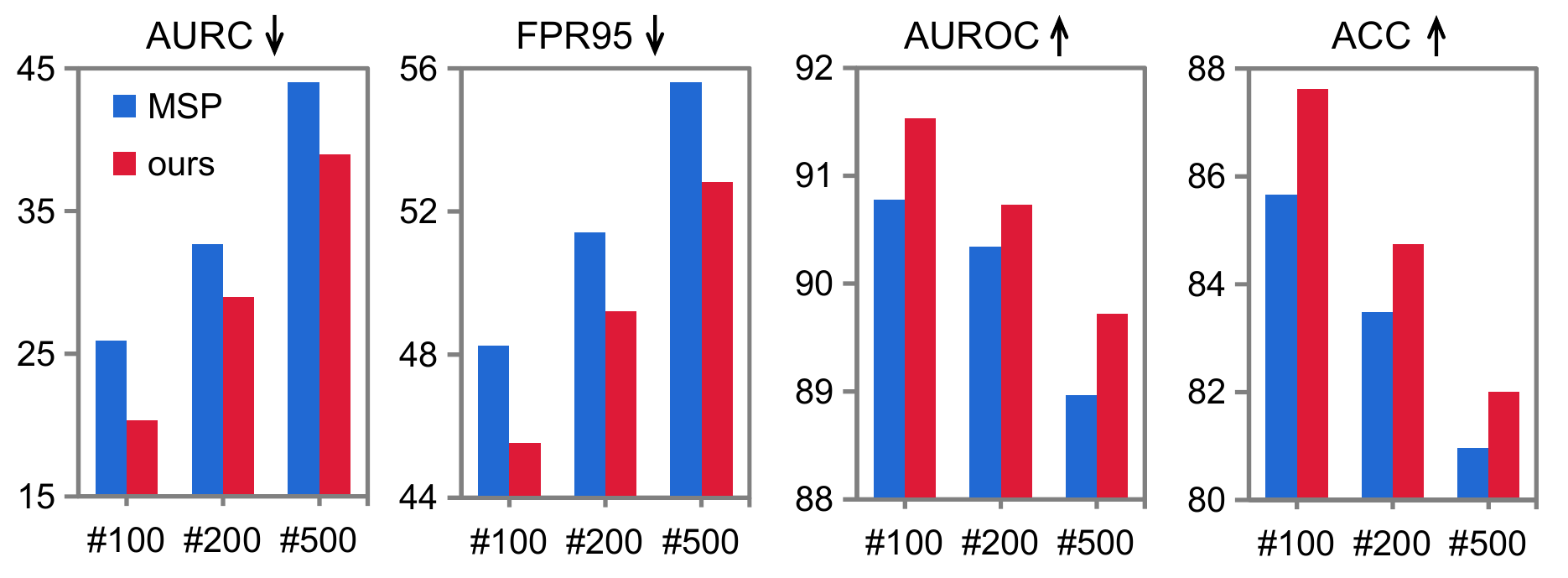}}
		\vskip -0.15 in
		\caption{Large-scale experiments on ImageNet.}
		\label{figure-6}
	\end{center}
	\vskip -0.5 in
\end{figure}

\setParDis \noindent\textbf{Large-scale experiments on ImageNet.} To demonstrate the scalability of our method, in Fig.~\ref{figure-6}, we report the results on ImageNet. Specifically, three settings which consist of random 100, 200, and 500 classes from ImageNet are conducted. For each experiment, we randomly sample another set of disjoint classes from ImageNet as outliers. As can be seen, OpenMix consistently boosts the MisD performance of baseline, improving the confidence reliability remarkably. Detailed training setups are provided in Supp.M.

\begin{figure}[t]
	\begin{center}
		\centerline{\includegraphics[width=\columnwidth]{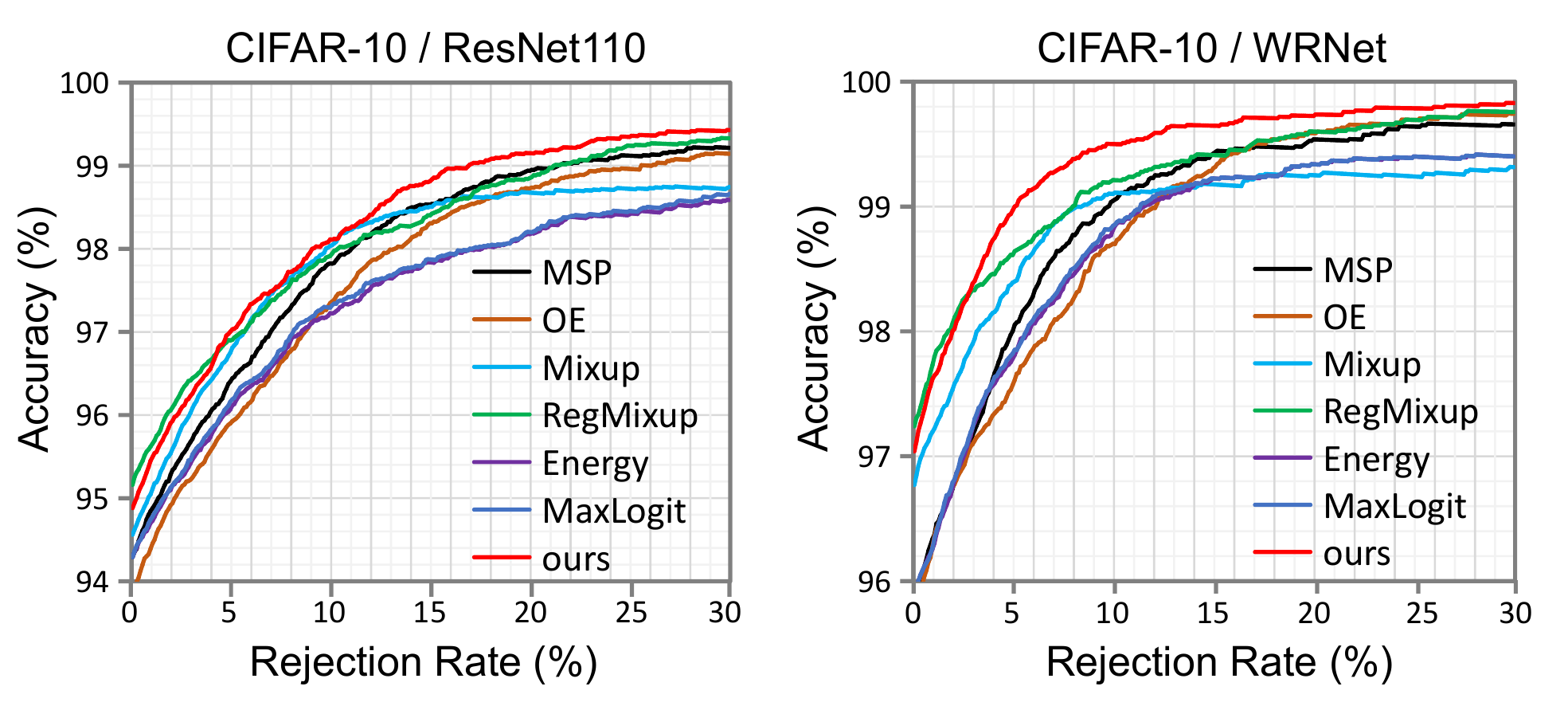}}
		\vskip -0.1 in
		\caption{Accuracy-rejection curves analysis: \ding{172} \textbf{\emph{Diverging}} between OOD detection methods (OE, Energy, MaxLogit) and MSP. \ding{173} \textbf{\emph{Crossing-over}} between Mixup/RegMixup and MSP. \ding{174} \textbf{\emph{Evenly spaced}} between our method and MSP.}
		\label{figure-7}
	\end{center}
	\vskip -0.35 in
\end{figure}
\noindent\textbf{Further analysis on accuracy-rejection curves.} Fig.~\ref{figure-7} plots the accuracy against rejection rate, \emph{i.e.}, accuracy-rejection curve (ARC) \cite{nadeem2009accuracy}, to straightway and graphically make comparison among several models. Particularly, we identify three different types of relationships described in \cite{nadeem2009accuracy}, \emph{i.e.}, \emph{diverging}, \emph{crossing-over}, and \emph{evenly spaced}. 
For selection of the best model by ARCs, \ding{172} if the desired accuracy is known,
one can move horizontally across the ARC plot and select the model with the lowest rejection
rate. \ding{173} Conversely, if the acceptable rejection rate is known, we select the model with the
highest accuracy. The results in Fig.~\ref{figure-7} recommend our method as the best in both cases. 
\setParDef

\begin{table}[h]
	\vskip 0.02in
	\caption{Ablation Study of each component in our method.}
	\vskip -0.1in
	\label{table-4}
	\setlength\tabcolsep{3.2pt}
	\centering
	\renewcommand{\arraystretch}{0.93}
	\scalebox{0.7}{
		\begin{tabular}{lccccccccc}
			\toprule \multirow{2}{*}{\textbf{Method}} & \multicolumn{4}{c}{\textbf{CIFAR-10}} && \multicolumn{4}{c}{\textbf{CIFAR-100}}\\
			\cmidrule(lr){2-5} \cmidrule(lr){7-10}
			&\textbf{AURC}  & \textbf{AUROC}  & \textbf{FPR95}   &\textbf{ACC}   &&\textbf{AURC}  & \textbf{AUROC}  & \textbf{FPR95}   &\textbf{ACC} \\
			\midrule
			MSP   &9.52 &90.13 & 43.33 & 94.30 &&89.05 &84.91 &65.65 &73.30\\
			+ RC     & 9.55  & 91.15 & 40.03 & 94.02 &  & 94.31 & 85.53 & 65.78 & 71.44 \\
			+ OT     & 12.38 & 87.13 & 61.83 & 93.84 &  & 99.86 & 82.51 & 72.94 & 72.62 \\
			OpenMix &\bftab{6.31} &\bftab{92.09}  &\bftab{39.63} &\bftab{94.98} &&\bftab{73.84} &\bftab{85.83} &\bftab{64.22} &\bftab{75.77}\\
			\bottomrule	
		\end{tabular}
	}
	\vskip -0.1 in
\end{table}
\subsection{Ablation Study}
\noindent\textbf{The effect of each component of OpenMix.} Our method is comprised of two components: \emph{learning with reject class} (\textbf{RC})  and \emph{outlier transformation} (\textbf{OT}). \ding{172} With only RC, the original outlier samples are used and labeled as the $k+1$ class. \ding{173} With only OT, it is reasonable to assign the following soft label to the mixed data: $\mathbb I^{\breve{y}} = \lambda \mathbb I^{y} + (1-\lambda) \mathcal{U}$. From Table~\ref{table-4}, we have three key observations: Firstly, RC performs slightly better or comparable with MSP, indicating that directly mapping OOD outliers to a reject class offers limited help. Secondly, OT alone can observably harm the performance. We expect this is because the interpolation between ID labels and uniform distribution suffers from the same issue as OE. Thirdly, OpenMix integrates them in a unified and complementary manner, leading to significant and consistent improvement over baseline. Supp.M provides more results on WRNet and DenseNet.

\setParDis \noindent\textbf{The choices of outlier dataset.}  Fig.~\ref{figure-8} reports results of using different outlier datasets. First, we can observe that using simple noises like {\ttfamily Gaussian noise} in OpenMix can lead to notable improvement. This verifies our insight that exposing low density regions is beneficial for MisD. Secondly, real-world datasets with semantic information yield better performance. 
\begin{figure}[h]
	\begin{center}
		\vskip -0.17 in
		\centerline{\includegraphics[width=\columnwidth]{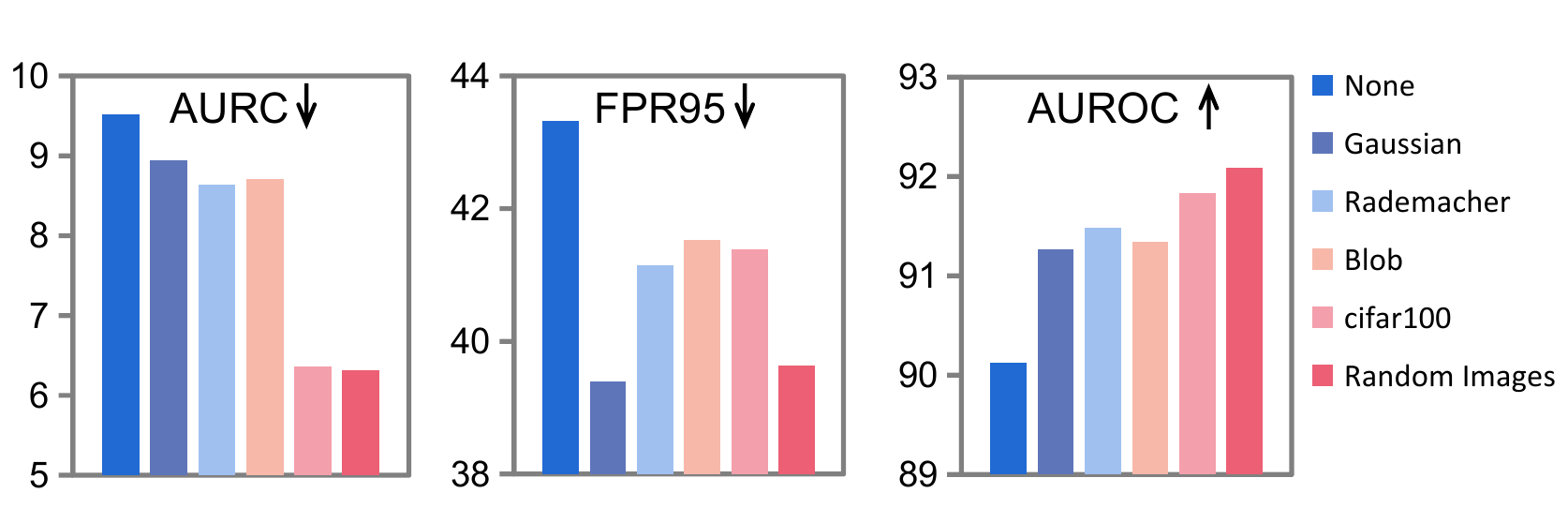}}
		\vskip -0.1 in
		\caption{Ablation study on the effect of different outlier datasets.}
		\label{figure-8}
	\end{center}
	\vskip -0.4 in
\end{figure}

\setParDis \noindent\textbf{Comparison of different interpolation strategies.} We use Mixup for outlier transformation due to its simplicity. Table~\ref{table-55} (CIFAR/ResNet110) shows that CutMix \cite{yun2019cutmix} and Manifold Mixup \cite{verma2019manifold} are also effective, further improving the performance of OpenMix.
\setParDef

\begin{table}[h]
	\caption{Comparison of different interpolation strategies.}
	\vskip -0.1in
	\label{table-55}
	\setlength\tabcolsep{2pt}
	\centering
	\renewcommand{\arraystretch}{1}
	\scalebox{0.7}{
		\begin{tabular}{lcccccccc}
			\toprule
			\multirow{2}{*}{Method} & \multicolumn{4}{c}{CIFAR-10} & \multicolumn{4}{c}{CIFAR-100} \\
			\cmidrule(lr){2-5} \cmidrule(lr){6-9}
			&\textbf{AURC}  & \textbf{AUROC}  & \textbf{FPR95}   &\textbf{ACC}   &\textbf{AURC}  & \textbf{AUROC}  & \textbf{FPR95}   &\textbf{ACC} \\
			\midrule
			ours w/Mixup &6.31 &92.09  &39.63 &94.98 &73.84 &85.83 &\textbf{64.22} &75.77  \\
			ours w/CutMix & 6.74 &\textbf{93.45}  &\textbf{36.82} &93.73 &76.28 &\textbf{86.49} &64.78 &74.15  \\
			ours w/Manifold & \textbf{5.67} &92.46  &36.91  &\textbf{95.21} &\textbf{71.71}  &85.83  &66.11  &\textbf{76.13}  \\
			\bottomrule
		\end{tabular}
	}
	\vskip -0.1in
\end{table}

\begin{table*}[t]
	\caption{Integrating OpenMix with CRL \cite{MoonKSH20} and FMFP \cite{zhu2022rethinking}. Our method can remarkably improve their MisD performance on CIFAR-10.}
	\vskip -0.1in
	\label{table-5}
	\setlength\tabcolsep{4.9pt}
	\centering
	\renewcommand{\arraystretch}{1}
	\scalebox{0.65}{
		\begin{tabular}{lccccccccccccccc}
			\toprule
		\multirow{2}{*}{\textbf{Method}} & \multicolumn{3}{c}{\textbf{AURC} $\downarrow$} && \multicolumn{3}{c}{\textbf{AUROC} $\uparrow$} && \multicolumn{3}{c}{\textbf{FPR95} $\downarrow$} && \multicolumn{3}{c}{\textbf{ACC} $\uparrow$}\\
		\cmidrule(lr){2-4} \cmidrule(lr){6-8} \cmidrule(lr){10-12} \cmidrule(lr){13-16}
		&ResNet110 & WRNet & DenseNet  && ResNet110 & WRNet & DenseNet  && ResNet110 & WRNet & DenseNet && ResNet110 & WRNet & DenseNet\\
		\midrule
		CRL \cite{MoonKSH20} &6.60$\pm$0.12 &3.99$\pm$0.17  &5.71$\pm$0.24 && 93.59$\pm$0.05 &94.37$\pm$0.21 &93.70$\pm$0.14  &&41.00$\pm$0.28 &32.83$\pm$1.17   &39.03$\pm$0.69 &&93.63$\pm$0.08 &95.42$\pm$0.20 &94.33$\pm$0.13  \\
		+ ours  &\bftab{4.48$\pm$0.10} &\bftab{2.02$\pm$0.05}  &\bftab{4.02$\pm$0.38} &&\bftab{94.43$\pm$0.02} &\bftab{95.40$\pm$0.11}  &\bftab{94.48$\pm$0.51} &&\bftab{33.20$\pm$0.43} &\bftab{25.50$\pm$0.93}  &\bftab{44.43$\pm$2.21}  &&\bftab{94.85$\pm$0.10} &\bftab{96.95$\pm$0.07}  &\bftab{95.36$\pm$0.09}\\
		\midrule

		FMFP \cite{zhu2022rethinking}  &5.33$\pm$0.15 &2.28$\pm$0.03   &4.09$\pm$0.11 &&94.07$\pm$0.09 &95.71$\pm$0.12 &94.82$\pm$0.10  &&39.37$\pm$0.77 & 25.20$\pm$1.23   & 30.35$\pm$1.72 &&94.36$\pm$0.09 &96.55$\pm$0.08 &95.11$\pm$0.16  \\
		+ ours  &\bftab{3.94$\pm$0.11} &\bftab{1.70$\pm$0.12}  &\bftab{3.58$\pm$0.16} &&\bftab{94.32$\pm$0.10} &\bftab{95.90$\pm$0.13} &\bftab{94.64$\pm$0.12} &&\bftab{30.41$\pm$0.83} &\bftab{18.78$\pm$2.13} &\bftab{29.36$\pm$0.68}  &&\bftab{95.43$\pm$0.08} &\bftab{97.33$\pm$0.11}  &\bftab{95.71$\pm$0.13}\\
			\bottomrule
		\end{tabular}
	}
	\vskip -0.07 in
\end{table*}

\begin{figure}[!h]
	\begin{center}
		\centerline{\includegraphics[width=1.04\columnwidth]{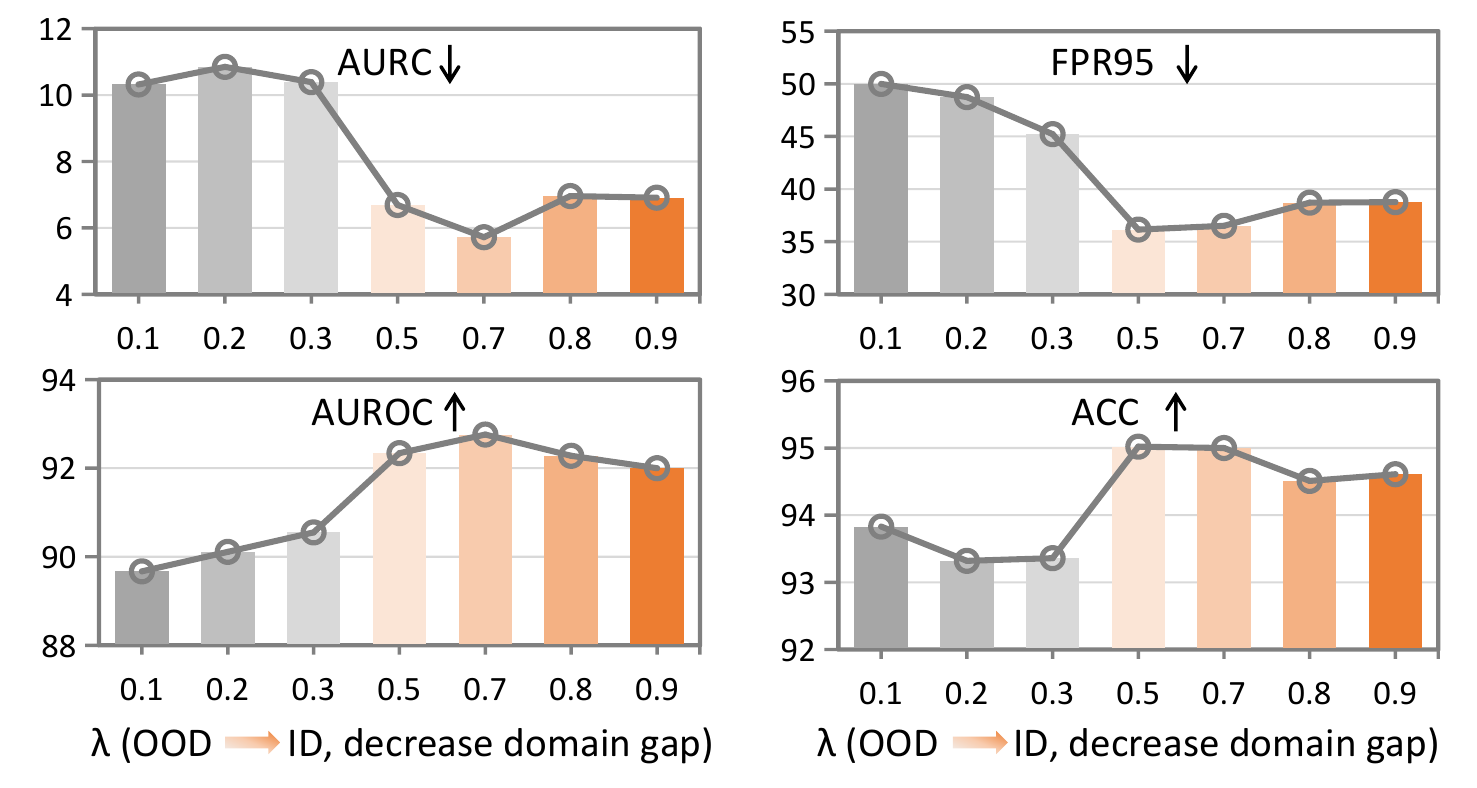}}
		\vskip -0.12 in
		\caption{Relationship between domain gap and performance gain. CIFAR-10/ResNet110, the used outlier dataset is {\ttfamily RandImages}.}
		\label{figure-99}
	\end{center}
	\vskip -0.3 in
\end{figure}
\subsection{Further Experiments and Analysis}
\noindent\textbf{The relationship between domain gap and performance gain.} Given a specific outlier dataset, the proposed outlier transformation can control and adjust the domain gap flexibly: if the outlier set is far, we can increase the ID information by enlarging $\lambda$
in Eq.~\ref{eq4}, and vice versa. In Fig.~\ref{figure-99}, we can observe that decreasing the domain gap firstly increases the performance gain and then reduces the gain.

\setParDis\noindent\textbf{OpenMix improves CRL and FMFP.} CRL \cite{MoonKSH20} ranks the confidence to reflect the historical correct rate. FMFP \cite{zhu2022rethinking} improves confidence reliability by seeking flat minima. Differently, OpenMix focuses on the complementary strategy to exploit the unlabeled outlier data. We show in Table~\ref{table-5} that our method can consistently boost the performance of CRL and FMFP, demonstrating the complementarity effectiveness of OpenMix.

\begin{figure}[h]
	\begin{center}
		\vskip -0.1 in
		\centerline{\includegraphics[width=\columnwidth]{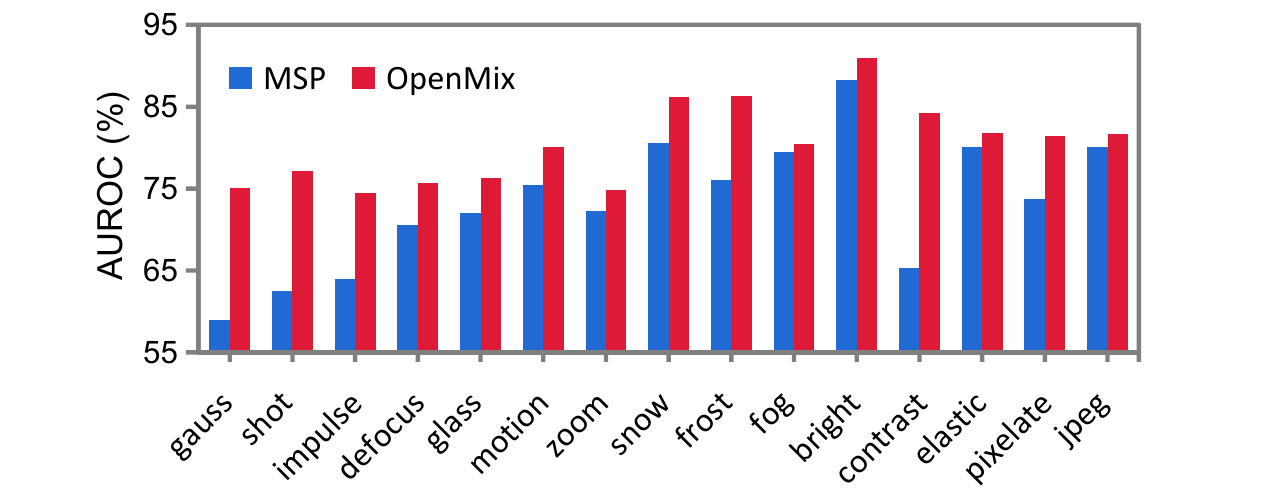}}
		\vskip -0.1 in
		\caption{MisD under distribution shift. Performance on 15 types of corruption under the severity level of 5 is reported. The model is trained on CIFAR-10/ResNet110 and tested on C10-C \cite{HendrycksD19}.}
		\label{figure-9}
	\end{center}
	\vskip -0.3 in
\end{figure}
\noindent\textbf{MisD under distribution shift.} In practice, environments can be easily changed, \emph{e.g.}, weather change from sunny to cloudy then to rainy. The model still needs to make reliable decisions under such distribution or domain shift conditions. To mimic those scenarios, we test the model on corruption datasets like C10-C \cite{HendrycksD19}. 
Fig.~\ref{figure-9} shows that OpenMix significantly improves the MisD performance under various corruptions, and the averaged AUROC can be improved from 73.28\% to 80.44\%. Supp.M provides averaged results under different severity levels and corruptions on C10-C and C100-C.

\begin{table}[h]
	\vskip 0.02in
	\caption{OpenMix improves MisD in long-tailed recognition.}
	\vskip -0.1in
	\label{table-6}
	\setlength\tabcolsep{4.1pt}
	\centering
	\renewcommand{\arraystretch}{}
	\scalebox{0.673}{
		\begin{tabular}{lccccccccc}
			\toprule
			\multirow{2}{*}{\textbf{Method}} & \multicolumn{4}{c}{\textbf{CIFAR-10-LT}} && \multicolumn{4}{c}{\textbf{CIFAR-100-LT}}\\
			\cmidrule(lr){2-5} \cmidrule(lr){7-10}
			&\textbf{AURC}  & \textbf{AUROC}  & \textbf{FPR95}   &\textbf{ACC}   &&\textbf{AURC}  & \textbf{AUROC}  & \textbf{FPR95}   &\textbf{ACC} \\
			\midrule
			LA \cite{MenonJRJVK21}  & 62.13  & 84.52 & 69.77  & 79.02 & & 347.43 & 78.46 & 76.47 & 41.69\\
			+ CRL  & 63.81 & 85.30  & 63.05  & 78.50 &&345.05 & 78.74 & 76.19  & 41.58\\
			+ ours &\bftab{38.07} & \bftab{87.21} &\bftab{64.14}  & \bftab{83.60} &&\bftab{284.77} &\bftab{81.22} &\bftab{73.80}  &\bftab{46.52}\\
			\midrule
			VS \cite{KiniPOT21}  & 58.45 & 84.47 & 70.15 & 80.11 &&343.48 &78.20 & 77.25 & 42.20 \\
			+ CRL   & 62.06  & 83.98 & 67.19 & 79.69 &&345.06 & 78.29 & 77.44 & 41.88 \\
			+ ours  &\bftab{41.52} &\bftab{87.12} &\bftab{63.31} &\bftab{83.02} &&\bftab{277.34} & \bftab{81.42} &\bftab{72.93} &\bftab{47.16} \\
			\bottomrule	
		\end{tabular}	
	}
\end{table}
\noindent\textbf{MisD in long-tailed recognition.} The class distributions in real-world settings often follow a long-tailed distribution \cite{cao2019learning, MenonJRJVK21}. For example, in a disease diagnosis system, the normal samples are typically more than the disease samples. In such failure-sensitive applications, reliable confidence estimation is especially crucial. We use long-tailed classification datasets CIFAR-10-LT and CIFAR-100-LT \cite{cao2019learning} with an imbalance ratio $\rho = 100$. The network is ResNet-32. We built our method on two long-tailed recognition methods LA \cite{MenonJRJVK21} and VS \cite{KiniPOT21}. Table~\ref{table-6} shows our method remarkably improves MisD performance and long-tailed classification accuracy. More results can be found in Supp.M.

\noindent\textbf{OpenMix improves OOD detection.} A good confidence estimator should help separate both the OOD and misclassified ID samples from correct predictions. Therefore, besides MisD, we explore the OOD detection ability of our method. The ID dataset is CIFAR-10. For the OOD datasets, we follow recent works that use \textbf{\emph{six}} common benchmarks: Textures, SVHN, Place365, LSUN-C, LSUN-R and iSUN. Metrics are AUROC, AUPR and FPR95 \cite{hendrycks2017baseline}. Table~\ref{table-7} shows that OpenMix also achieves strong OOD detection performance along with high MisD ability, which is not achievable with OE and other OOD detection methods. Results on CIFAR-100 can be found in Supp.M. \setParDef

\begin{table}[h]
	\caption{OOD detection performance. All values are percentages and are averaged over \textbf{\emph{six}} OOD test datasets.}
	\vskip -0.1in
	\label{table-7}
	\setlength\tabcolsep{2.5pt}
	\centering
	\renewcommand{\arraystretch}{1.1}
	\scalebox{0.6}{
		\begin{tabular}{lccccccccc}
			\toprule
			\multirow{2}{*}{\textbf{Method}} & \multicolumn{3}{c}{\textbf{FPR95} $\downarrow$} & \multicolumn{3}{c}{\textbf{AUROC} $\uparrow$} & \multicolumn{3}{c}{\textbf{AUPR} $\uparrow$}\\
			\cmidrule(lr){2-4} \cmidrule(lr){5-7} \cmidrule(lr){8-10}
			& ResNet & WRN & DenseNet & ResNet & WRN & DenseNet & ResNet & WRN & DenseNet\\\midrule
			MSP \cite{hendrycks2017baseline}  &51.69 &40.83 &48.60 &89.85 &92.32 &91.55 &97.42&97.93 &98.11 \\
			LogitNorm \cite{wei2022logitnorm} &29.72 &12.97 &19.72 &94.29 &97.47 &96.19 &98.70 &99.47 &99.11 \\
			ODIN \cite{LiangLS18} &35.04 &26.94 &30.67 &91.09 &93.35 &93.40 &97.47 &97.98 &98.30 \\
			Energy \cite{liu2020energy} &33.98 &25.48 &30.01 &91.15 &93.58 &93.45 &97.49 &98.00 &98.35 \\
			MaxLogit \cite{hendrycks2019anomalyseg}  &34.61 &26.72 &30.99 &91.13 &93.14 &93.44 &97.46 &97.78 &98.35 \\
			OE \cite{hendrycks2019deep} &\bftab{5.28} &\bftab{3.49} &\bftab{5.25} &\bftab{98.04} &\bftab{98.59} &\bftab{98.20} &\bftab{99.55} &\bftab{99.71} &\bftab{99.62} \\
			FMFP \cite{zhu2022rethinking}  &39.50 &26.83 &35.12 &93.83 &96.22 &94.88 &98.73 &99.23 &98.95 \\
			OpenMix (ours) &39.72 &16.86 &32.75 &93.22 &96.92 &94.85 &98.46 &99.34 &98.84  \\
			\bottomrule
		\end{tabular}
	}
	\vskip -0.13in
\end{table}

\section{Conclusive Remarks}
MisD is an important but under-explored area of research. In this paper, we propose OpenMix, a simple yet effective approach that explores outlier data for helping detect misclassification errors. Extensive experiments demonstrate that OpenMix significantly improves the confidence reliability of DNNs and yields strong performance under distribution shift and long-tailed scenarios.
Particularly, recent works \cite{kim2021unified, jaeger2022call, cen2023devil, bernhardtfailure} claim that none of the existing methods performs well for both OOD detection and MisD. Fortunately, the proposed OpenMix can detect OOD and misclassified samples in a unified manner. We hope that our work opens possibilities to
explore unified methods that can detect both OOD samples and misclassified samples.

\setParDis
\noindent
\textbf{Acknowledgement.} This work has been supported by the National Key Research and Development Program (2018AAA0100400), National Natural Science Foundation of China (61836014, 62222609, 62076236, 61721004), Key Research Program of Frontier Sciences of Chinese Academy of Sciences (ZDBS-LY-7004), Youth Innovation Promotion Association of Chinese Academy of Sciences (2019141).
\setParDef

\appendix
\onecolumn
\clearpage
\begin{center}{\bf {\Large Supplementary Material}}
\end{center}
\vspace{0.1in}

\section{More Details on Experimental Setups}
\subsection{Experiments on CIFAR-10 and CIFAR-100}
\noindent
\textbf{Auxiliary datasets.}
We use {\ttfamily 300K Random Images} \cite{hendrycks2019deep} as the auxiliary outlier dataset for experiments with CIFAR-10 and CIFAR-100. Specifically, {\ttfamily 300K Random Images} is a cleaned and debiased dataset with 300K natural images. In this dataset, images that belong to CIFAR classes are removed so that in-distribution (ID) training set and outlier dataset are disjoint. In Sec. 5.2 and Fig. 8, we conduct experiments on CIFAR-10 to study the effectiveness of other outlier datasets, \emph{i.e.}, {\ttfamily Gaussian}, {\ttfamily Rademacher}, {\ttfamily Blob}, {\ttfamily CIFAR-100}. Following \cite{wei2021open}, {\ttfamily Gaussian} noises are sampled from an isotropic Gaussian distribution. {\ttfamily Rademacher} noises are sampled from a symmetric Rademacher distribution. {\ttfamily Blob} noises consist of algorithmically generated amorphous shapes with definite edges.

\setParDis
\noindent
\textbf{Hyperparameters.} For OE \cite{hendrycks2019deep}, we set $\lambda=0.5$ in Eq. 2 as recommended in the original paper \cite{hendrycks2019deep}. For Mixup \cite{zhang2018mixup}, the coefficient of linear interpolation $\lambda$ is sampled as $\lambda\sim\text{Beta}(\alpha, \alpha)$, and we set $\alpha=0.3$ as recommended in the original paper \cite{zhang2018mixup}. For RegMixup \cite{pinto2022regmixup}, we set $\alpha=10$ as recommended in the original paper \cite{pinto2022regmixup}.
For our OpenMix, the $\lambda$ in Eq. 4 is sampled as $\lambda\sim\text{Beta}(\alpha, \alpha)$, and we set $\alpha=10$ in our experiments. The $\gamma$ in Eq. 5 is set as 1.\setParDef

\subsection{Experiments on ImageNet}
\noindent For experiments on ImageNet, the backbone is ResNet-50 \cite{he2016deep} and we perform automatic mixed
precision to accelerate the training by using the open-sourced code at \url{https://github.com/NVIDIA/apex/tree/master/examples/imagenet}. For each experiment, we train 90 epochs.
Three settings which consist of random 100, 200, and 500 classes from ImageNet are conducted after shuffling the class order with the fixed random seed 1993. For each experiment, we use another set of disjoint classes from ImageNet as outliers, and the outlier dataset has the same number of classes as that of training set. The $\lambda$ in Eq. 4 is sampled as $\lambda\sim\text{Beta}(\alpha, \alpha)$, and we set $\alpha=10$. The $\gamma$ in Eq. 5 is set to be 0.5.

\begin{table}[h]
	\vskip 0.05in
	\caption{MisD performance under distribution shift. The averaged
		results for 15 kinds of corruption under five different level
		perturbation severity are reported. }
	\vskip -0.07in
	\label{table-A1}
	\setlength\tabcolsep{3pt}
	\centering
	\renewcommand{\arraystretch}{1.1}
	\scalebox{0.8}{
		\begin{tabular}{lcccccccccccc}
			\toprule
			\multirow{2}{*}{\textbf{Method}} & \multicolumn{3}{c}{\textbf{AUROC} $\uparrow$} & \multicolumn{3}{c}{\textbf{AURC} $\downarrow$} & \multicolumn{3}{c}{\textbf{FPR95} $\downarrow$} & \multicolumn{3}{c}{\textbf{ACC} $\uparrow$}\\
			\cmidrule(lr){2-4} \cmidrule(lr){5-7} \cmidrule(lr){8-10} \cmidrule(lr){11-13}
			& ResNet & WRN & DenseNet & ResNet & WRN & DenseNet & ResNet & WRN & DenseNet & ResNet & WRN & DenseNet\\ \cmidrule(lr){2-13}
			&\multicolumn{11}{c}{\textbf{CIFAR-10-C}}  \\
			\midrule
			MSP \cite{hendrycks2017baseline} & 79.92  &83.34 &81.82 &154.58 &120.36 &154.72 &70.23 &64.48 &68.56 &72.27 &75.57 &71.08  \\
			CRL \cite{MoonKSH20} & 82.57  &85.86 &83.86 &143.19 &100.27 &135.46 &68.26 &62.86 &66.93 &71.19 &76.24 &71.82  \\
			OpenMix &\bftab{84.98} &\bftab{90.38} &\bftab{85.62} &\bftab{65.51} &\bftab{27.78} &\bftab{71.86} &\bftab{62.11} &\bftab{48.07} &\bftab{60.65} &\bftab{82.03} &\bftab{88.33} &\bftab{81.38}  \\
			\bottomrule
			\toprule
			&\multicolumn{11}{c}{\textbf{CIFAR-100-C}}  \\ \cmidrule(lr){2-13}
			MSP \cite{hendrycks2017baseline} & 77.39  &79.70 &75.86 &356.87 &299.82 &376.37 &76.70 &72.77 &76.88 &45.27 &51.38 &44.92  \\
			CRL \cite{MoonKSH20} & 79.00  &80.71 &78.15 &340.48 &273.60 &346.73 &74.68 &71.13 &75.25 &45.91 &53.38 &46.56  \\
			OpenMix &\bftab{78.56} &\bftab{84.05} &\bftab{79.00} &\bftab{303.82} &\bftab{176.15} &\bftab{299.71} &\bftab{74.61} &\bftab{66.24} &\bftab{74.45} &\bftab{50.61} &\bftab{62.09} &\bftab{51.29}  \\
			\bottomrule
		\end{tabular}
	}
\end{table}
\section{Additional Experimental Results}
\subsection{More results for MisD under distribution shift}
\noindent Table~\ref{table-A1} presents more results of MisD under distribution shift. The models trained on clean datasets (CIFAR-10 and CIFAR-100) are evaluated on corrupted dataset CIFAR-10/100-C \cite{HendrycksD19}. the corruption dataset contains copies of the original validation set with 15 types of corruptions of algorithmically generated corruptions from noise, blur, weather, and digital categories. Each type of corruption has five levels of severity, resulting in 75 distinct corruptions. In Table~\ref{table-A1}, we can observe that OpenMix performs the best.

\subsection{More results for long-tailed MisD}
\noindent Besides the results in Table. 6, we also compared the MisD performance of our method with TLC \cite{Li2022CVPR} under the same experimental setup. The results of TLC and others are from \cite{Li2022CVPR}. As can be observed from Fig.~\ref{figure-A1}, our method has the best performance, \emph{i.e.}, highest AUROC and lowest FPR95.
\begin{figure}[t]
	\begin{center}
		\centerline{\includegraphics[width=0.8\columnwidth]{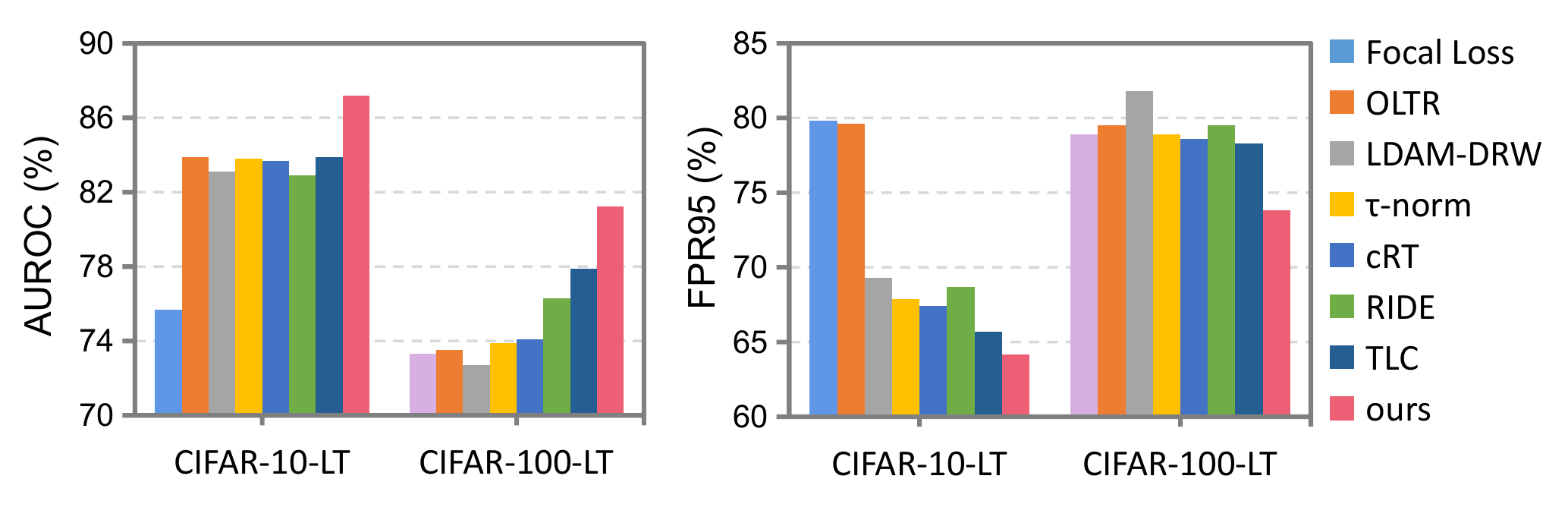}}
		\vskip -0.15 in
		\caption{More comparison on long-tailed MisD.}
		\label{figure-A1}
	\end{center}
	\vskip -0.3  in
\end{figure}

\subsection{More results for OOD detection}
\noindent Table~\ref{table-A2} presents the detailed results of OOD detection performance on CIFAR-10 and CIFAR-100. From the results, we show that our method can yield strong OOD detection performance. In addition, since Openmix is a training-time method, it can combine with any other post-processing OOD detection methods such as ODIN, Energy and MaxLogit to get higher OOD detection performance. 

\begin{table}[t]
	\caption{OOD detection performance. All values are percentages and are averaged over six OOD test datasets.}
	\vskip -0.07in
	\label{table-A2}
	\setlength\tabcolsep{4pt}
	\centering
	\renewcommand{\arraystretch}{1}
	\scalebox{0.9}{
		\begin{tabular}{lccccccccc}
			\toprule
			\multirow{2}{*}{\textbf{Method}} & \multicolumn{3}{c}{\textbf{FPR95} $\downarrow$} & \multicolumn{3}{c}{\textbf{AUROC} $\uparrow$} & \multicolumn{3}{c}{\textbf{AUPR} $\uparrow$}\\
			\cmidrule(lr){2-4} \cmidrule(lr){5-7} \cmidrule(lr){8-10}
			& ResNet & WRN & DenseNet & ResNet & WRN & DenseNet & ResNet & WRN & DenseNet\\\midrule
			&\multicolumn{9}{c}{ID: CIFAR-10} \\
			\cmidrule(lr){2-10}
			MSP \cite{hendrycks2017baseline}  &51.69 &40.83 &48.60 &89.85 &92.32 &91.55 &97.42&97.93 &98.11 \\
			LogitNorm \cite{wei2022logitnorm} &29.72 &12.97 &19.72 &94.29 &97.47 &96.19 &98.70 &99.47 &99.11 \\
			ODIN \cite{LiangLS18} &35.04 &26.94 &30.67 &91.09 &93.35 &93.40 &97.47 &97.98 &98.30 \\
			Energy \cite{liu2020energy} &33.98 &25.48 &30.01 &91.15 &93.58 &93.45 &97.49 &98.00 &98.35 \\
			MaxLogit \cite{hendrycks2019anomalyseg}  &34.61 &26.72 &30.99 &91.13 &93.14 &93.44 &97.46 &97.78 &98.35 \\
			OE \cite{hendrycks2019deep} &\bftab{5.28} &\bftab{3.49} &\bftab{5.25} &\bftab{98.04} &\bftab{98.59} &\bftab{98.20} &\bftab{99.55} &\bftab{99.71} &\bftab{99.62} \\
			CRL \cite{MoonKSH20} &51.18 &40.83 &47.28 &91.21 &93.67 &92.37 &98.11 &98.67 &98.35 \\
			FMFP \cite{zhu2022rethinking}  &39.50 &26.83 &35.12 &93.83 &96.22 &94.88 &98.73 &99.23 &98.95 \\
			OpenMix (ours) &39.72 &16.86 &32.75 &93.22 &96.92 &94.85 &98.46 &99.34 &98.84  \\
			\bottomrule
			\toprule
			&\multicolumn{9}{c}{ID: CIFAR-100}  \\
			\cmidrule(lr){2-10}
			MSP \cite{hendrycks2017baseline}  &81.68 &77.53 &77.03 &74.21 &77.96 &76.79 &93.34 &94.36 &93.94 \\
			LogitNorm \cite{wei2022logitnorm} &63.49 &57.38 &61.56 &82.50 &86.60 &82.10 &95.43 &96.80 &95.16 \\
			ODIN \cite{LiangLS18} &74.30 &76.03 &69.44 &76.55 &79.57 &80.53 &93.54 &94.59 &94.78 \\
			Energy \cite{liu2020energy} &74.42 &74.93 &68.36 &76.43 &79.89 &80.87 &93.59 &94.66 &94.86 \\
			MaxLogit \cite{hendrycks2019anomalyseg}  &74.45 &75.27 &69.85 &76.61 &79.75 &80.48 &93.66 &94.67 &94.77 \\
			OE \cite{hendrycks2019deep} &\bftab{59.85} &\bftab{49.02} &\bftab{53.03} &\bftab{86.33} &\bftab{90.07}  &\bftab{88.51} &\bftab{96.47} &\bftab{97.67} &\bftab{97.25} \\
			CRL \cite{MoonKSH20} &81.67 &79.08 &75.77 &72.72 &76.81 &76.41 &92.69 &94.22 &93.85 \\
			FMFP \cite{zhu2022rethinking} &80.19 &70.98 &72.87 &72.92 &81.54 &77.56 &92.94 &95.71 &94.19 \\
			OpenMix (ours) &74.66 &68.87 &66.63 &75.95 &84.88 &81.23 &93.56 &96.55 &95.30 \\
			\bottomrule
		\end{tabular}
	}
\end{table}

\subsection{Ablation study on different distribution of interpolation coefficient $\lambda\sim\text{Beta}(\alpha, \alpha)$}

\begin{figure}
	\begin{minipage}{.5\linewidth}
		\centering
		\vskip -0.17 in
		\includegraphics[width=6.5cm]{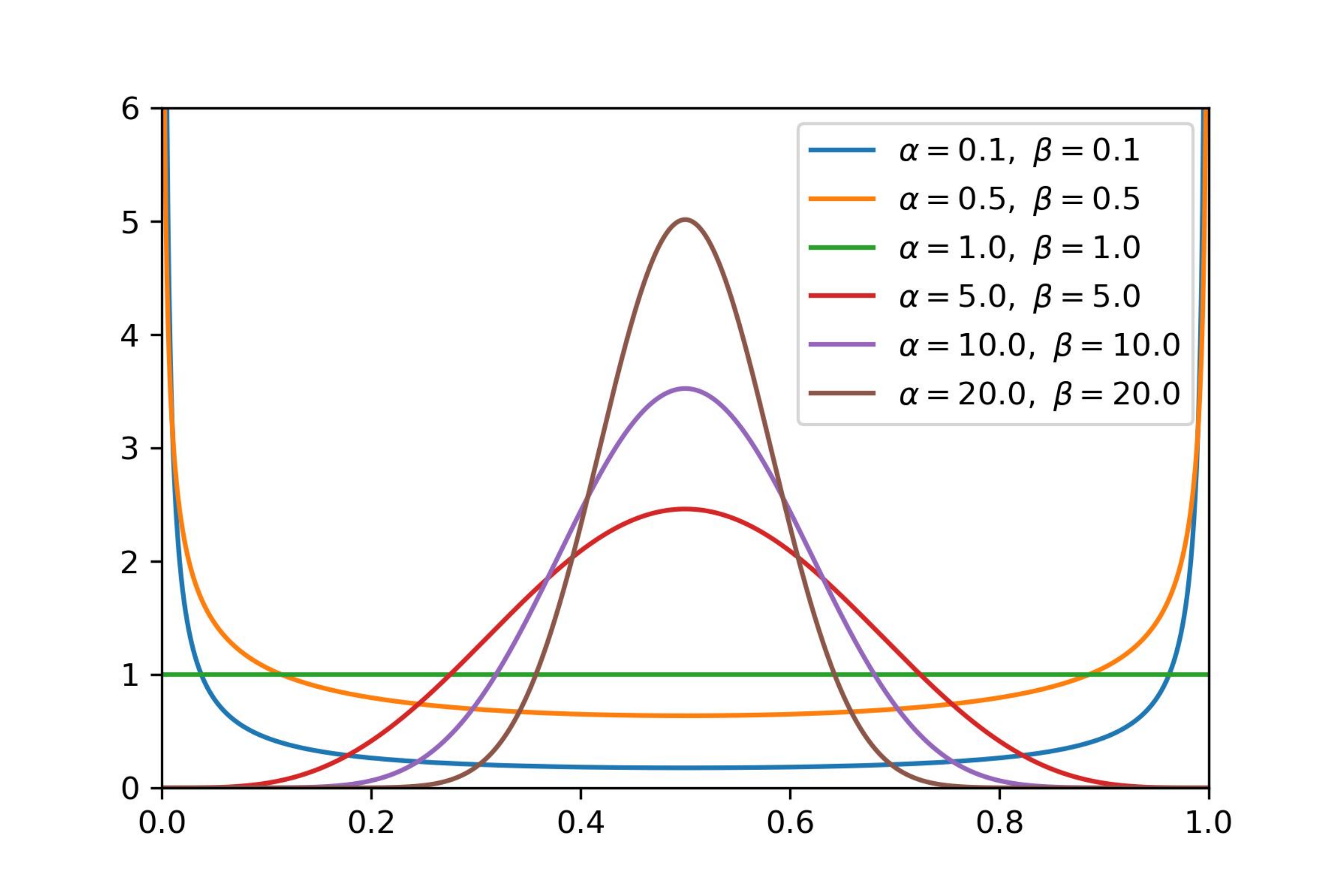}
		\vskip -0.17 in
		\caption{$\text{Beta}(\alpha, \alpha)$ pdf for varying $\alpha$.}
		\label{figure-A2}
	\end{minipage}%
	\begin{minipage}{.5\linewidth}
		\setlength\tabcolsep{8pt}
		\centering
		\renewcommand{\arraystretch}{1}
		\scalebox{0.85}{
			\begin{tabular}{cccccc}
				\toprule
				\textbf{$\alpha$} &\textbf{AURC} $\downarrow$ & \textbf{AUROC} $\uparrow$ & \textbf{FPR95} $\downarrow$  &\textbf{ACC} $\uparrow$ \\
				\midrule
				None  &9.52$\pm$0.49 &90.13$\pm$0.46 & 43.33$\pm$0.59 & 94.30$\pm$0.06\\
				\midrule
				0.1 & 9.30$\pm$2.01  & 92.04$\pm$0.81 & 46.69$\pm$5.72 & 92.76$\pm$0.91 \\
				0.5 & 7.56$\pm$1.51 & 91.87$\pm$1.43 & 44.20$\pm$3.76  & 94.00$\pm$0.25    \\
				1 & 6.58$\pm$0.61 & 92.19$\pm$0.36 & 39.17$\pm$1.77 & 94.59$\pm$0.19  \\
				5 & 6.08$\pm$0.88 & 92.46$\pm$1.02 & 37.44$\pm$0.79 & 94.89$\pm$0.15 \\
				10  &6.31$\pm$0.32 &92.09$\pm$0.36  &39.63$\pm$2.36 &94.98$\pm$0.20 \\
				20 & 5.96$\pm$0.90 & 92.45$\pm$0.22 & 35.70$\pm$0.72  & 95.12$\pm$0.17 \\
				\bottomrule
			\end{tabular}
		}
		\captionof{table}{Ablation study on $\alpha$.}
		\label{table-A3}
	\end{minipage}
\end{figure}

\noindent We conduct experiments to compare the effectiveness of different interpolation coefficient, which is drawn from the Beta distribution (refer Fig.~\ref{figure-A2}). Specifically, high values of $\alpha$ would encourage $\lambda \approx 0.5$. As shown in Table~\ref{table-A3}, large values of $\alpha$ (strong interpolations) lead to good performance. Since we aim to improve the exposure of low density regions, the interpolation should be strong to yield low confidence samples.

\subsection{Using Cutmix to transform outliers in OpenMix}
\noindent In our main manuscript, linear interpolation is applied to transform the outliers. An alternative way is to use non-linear strategy like CutMix \cite{yun2019cutmix}. From the results in Table~\ref{table-A4}, we observe that Cutmix based outlier transformation can yield comparable performance as mixup based.
\begin{table*}[h]
	\caption{Comparison between linear (Mixup) and non-linear (Cutmix) based outlier transformation.}
	\vskip -0.07in
	\label{table-A4}
	\setlength\tabcolsep{3pt}
	\centering
	\renewcommand{\arraystretch}{1.1}
	\scalebox{0.8}{
		\begin{tabular}{llccccccccc}
			\toprule
			\multirow{2}{*}{\textbf{Network}} &\multirow{2}{*}{\textbf{Method}} & \multicolumn{4}{c}{\textbf{CIFAR-10}} && \multicolumn{4}{c}{\textbf{CIFAR-100}}\\
			\cmidrule(lr){3-6} \cmidrule(lr){8-11}
			& &\textbf{AURC} $\downarrow$ & \textbf{AUROC} $\uparrow$ & \textbf{FPR95} $\downarrow$  &\textbf{ACC} $\uparrow$  &&\textbf{AURC} $\downarrow$ & \textbf{AUROC} $\uparrow$ & \textbf{FPR95} $\downarrow$  &\textbf{ACC} $\uparrow$ \\
			\midrule
			\multirow{3}{*}{ResNet110} 
			&MSP~{\scriptsize\textcolor{darkgray}{[ICLR17]}} \cite{hendrycks2017baseline}  &9.52$\pm$0.49 &90.13$\pm$0.46 & 43.33$\pm$0.59 & 94.30$\pm$0.06 &&89.05$\pm$1.39 &84.91$\pm$0.13 &65.65$\pm$1.72 &73.30$\pm$0.25\\
			&OpenMix (w/Mixup) &\bftab{6.31$\pm$0.32} &92.09$\pm$0.36  &39.63$\pm$2.36 &\bftab{94.98$\pm$0.20} &&\bftab{73.84$\pm$1.31} &85.83$\pm$0.22 &\bftab{64.22$\pm$1.35} &\bftab{75.77$\pm$0.35}\\
			&OpenMix (w/CutMix) &6.74$\pm$1.07 & \bftab{93.45$\pm$0.44} & \bftab{36.82$\pm$3.65} & 93.73$\pm$0.72 &  & 76.28$\pm$1.83 & \bftab{86.49$\pm$0.17} & 64.78$\pm$0.93 & 74.15$\pm$0.41 \\
			\midrule
			\multirow{3}{*}{WRNet} 
			&MSP~{\scriptsize\textcolor{darkgray}{[ICLR17]}} \cite{hendrycks2017baseline}  &4.76$\pm$0.62 &93.14$\pm$0.38 &30.15$\pm$1.98 &95.91$\pm$0.07 &&46.84$\pm$0.90  &88.50$\pm$0.44 & 56.64$\pm$1.33 & 80.76$\pm$0.18\\
			&OpenMix (w/Mixup) &\bftab{2.32$\pm$0.15} &\bftab{94.81$\pm$0.34}  &\bftab{22.08$\pm$1.86} &\bftab{97.16$\pm$0.10} &&\bftab{39.61$\pm$0.54} &\bftab{89.06$\pm$0.11}  &\bftab{55.00$\pm$1.29} &\bftab{82.63$\pm$0.06}\\
			&OpenMix (w/CutMix) &3.11$\pm$0.50 & 94.14$\pm$0.17 & 28.25$\pm$2.25 & 96.60$\pm$0.40 &  & 43.22$\pm$1.01 & 89.16$\pm$0.16 & 55.62$\pm$1.67 & 80.94$\pm$0.31 \\
			\midrule
			\multirow{3}{*}{DenseNet} 
			&MSP~{\scriptsize\textcolor{darkgray}{[ICLR17]}} \cite{hendrycks2017baseline} &5.66$\pm$0.45 &93.14$\pm$0.65 &38.64$\pm$4.70 &94.78$\pm$0.16 &&66.11$\pm$1.56 &86.20$\pm$0.04 &62.79$\pm$0.83 &76.96$\pm$0.20 \\
			&OpenMix (w/Mixup) &\bftab{4.68$\pm$0.72} &93.57$\pm$0.81  &\bftab{33.57$\pm$3.70} &\bftab{95.51$\pm$0.23} &&\bftab{53.83$\pm$0.93} &\bftab{87.45$\pm$0.18} &\bftab{62.22$\pm$1.15}  &\bftab{78.97$\pm$0.31}\\
			&OpenMix (w/CutMix) &5.44$\pm$0.50 & \bftab{93.80$\pm$0.13} & 37.28$\pm$2.15 & 94.48$\pm$0.39 &  & 68.80$\pm$6.96 & 86.46$\pm$0.57 & 63.99$\pm$2.41 & 75.92$\pm$1.24\\
			\bottomrule
		\end{tabular}
	}
	\vskip -0.1 in
\end{table*}

\begin{figure}[h]
	\begin{center}
		\centerline{\includegraphics[width=0.8\columnwidth]{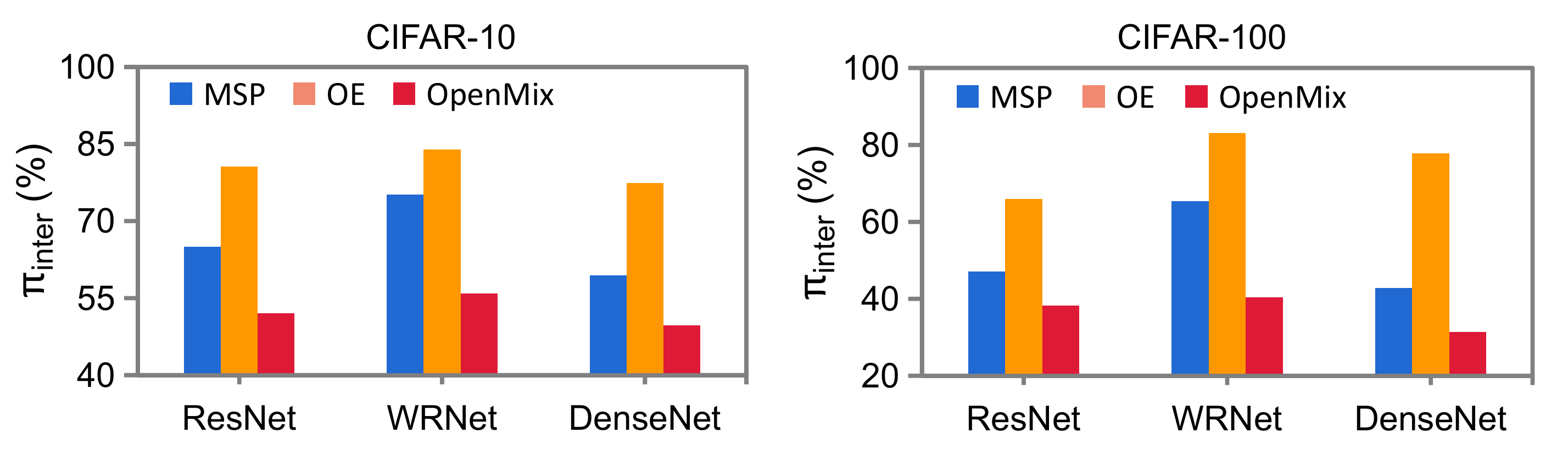}}
		\vskip -0.15 in
		\caption{Comparison of the inter-distance $\pi_{inter}$ of the deep feature space.}
		\label{figure-A3}
	\end{center}
	\vskip -0.25 in
\end{figure}

\begin{figure}[t]
	\begin{center}
		\centerline{\includegraphics[width=0.9\columnwidth]{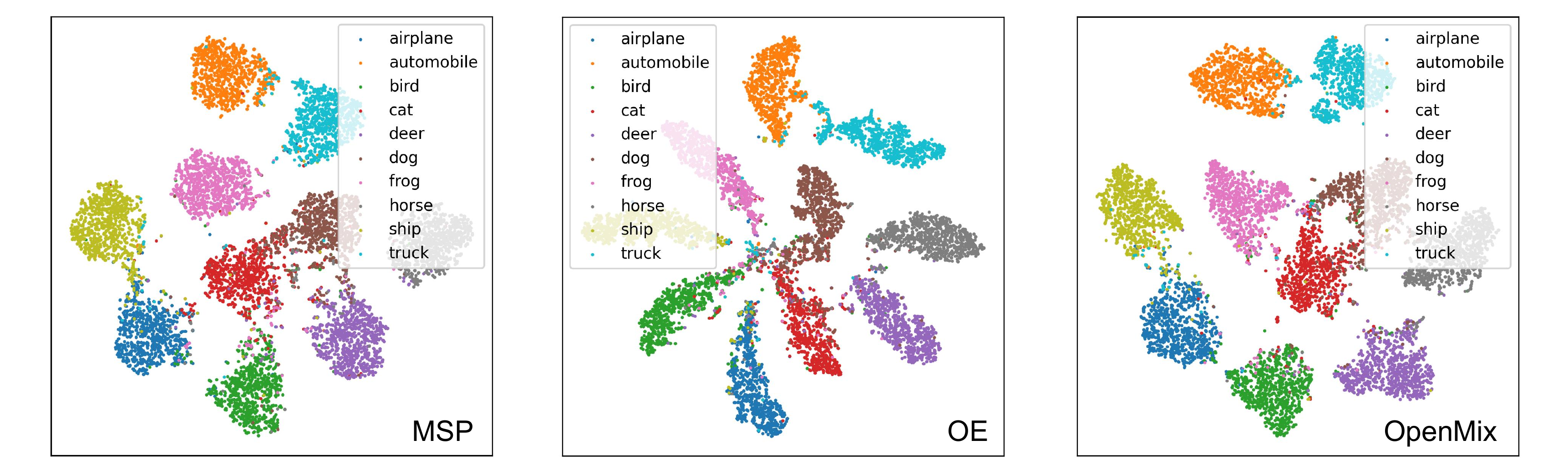}}
		\vskip -0.05 in
		\caption{Qualitative visualization of the deep feature space using TSNE \cite{vandermaaten08a}.}
		\label{figure-A4}
	\end{center}
	\vskip -0.3 in
\end{figure}

\subsection{More results of feature space distance and visualization}
\noindent In our main manuscript, we only plot the results of FSU due to the space limitation. Here, Fig.~\ref{figure-A3} plots the inter-class distance of the deep feature space. As can be observed, the inter-class distance with OE is observably enlarged, which indicates excessive feature compression and has negative influence for MisD. Our OpenMix leads to less compact feature distributions. Besides, Fig.~\ref{figure-A4} presents qualitative visualization to look at the effectiveness of OpenMix. Compared with MSP and OE, the feature distribution is smoother and the decision boundary is clearer, and the misclassified samples are mostly mapped to low-density regions in feature distribution.

\subsection{Ablation Study of each component in our method}
\noindent Table~\ref{table-A44} presents more results of each component in our method on WRNet and DenseNet.

\begin{table*}[h]
	\vskip 0.02in
	\caption{Ablation Study of each component in our method.}
	\vskip -0.07in
	\label{table-A44}
	\setlength\tabcolsep{5pt}
	\centering
	\renewcommand{\arraystretch}{1}
	\scalebox{0.83}{
		\begin{tabular}{llccccccccc}
			\toprule
			\multirow{2}{*}{\textbf{Network}}  &\multirow{2}{*}{\textbf{Method}} & \multicolumn{4}{c}{\textbf{CIFAR-10}} && \multicolumn{4}{c}{\textbf{CIFAR-100}}\\
			\cmidrule(lr){3-6} \cmidrule(lr){8-11}
			&&\textbf{AURC}  & \textbf{AUROC}  & \textbf{FPR95}   &\textbf{ACC}   &&\textbf{AURC}  & \textbf{AUROC}  & \textbf{FPR95}   &\textbf{ACC} \\
			\midrule
			\multirow{4}{*}{ResNet} 
			&MSP   &9.52 &90.13 & 43.33 & 94.30 &&89.05 &84.91 &65.65 &73.30\\
			&+ RC     & 9.55  & 91.15 & 40.03 & 94.02 &  & 94.31 & 85.53 & 65.78 & 71.44 \\
			&+ OT     & 12.38 & 87.13 & 61.83 & 93.84 &  & 99.86 & 82.51 & 72.94 & 72.62 \\
			&OpenMix &\bftab{6.31} &\bftab{92.09}  &\bftab{39.63} &\bftab{94.98} &&\bftab{73.84} &\bftab{85.83} &\bftab{64.22} &\bftab{75.77}\\
			\midrule
			\multirow{4}{*}{WRNet} 
			&MSP   &4.76 &93.14 &30.15 &95.91 &&46.84  &88.50 & 56.64 & 80.76\\
			&+ RC     & 4.28  & 93.95 & 30.05 & 95.62 &  & 54.32 & 88.08 & 60.17 & 78.69 \\
			&+ OT     & 5.75  & 90.71 & 49.69 & 95.77 &  & 54.38 & 86.24 & 64.68 & 80.12 \\
			&OpenMix &\bftab{2.32} &\bftab{94.81}  &\bftab{22.08} &\bftab{97.16} &&\bftab{39.61} &\bftab{89.06}  &\bftab{55.00} &\bftab{82.63}\\
			\midrule
			\multirow{4}{*}{DenseNet} 
			&MSP    &5.66 &93.14 &38.64 &94.78 &&66.11 &86.20 &62.79 &76.96 \\
			&+ RC     & 6.04  & 93.07 & 37.55 & 94.56 &  & 70.73 & 86.78 & 64.36 & 75.21 \\
			&+ OT     & 10.46 & 87.76 & 62.85 & 94.29 &  & 76.92 & 84.09 & 70.55 & 75.78 \\
			&OpenMix &\bftab{4.68} &\bftab{93.57}  &\bftab{33.57} &\bftab{95.51} &&\bftab{53.83} &\bftab{87.45} &\bftab{62.22}  &\bftab{78.97}\\
			\bottomrule	
		\end{tabular}
	}
\end{table*}

\section{Additional Analysis}
\subsection{More insights: Impact of feature space uniformity for OOD detection and MisD}
\noindent We provide more insights about the connection between feature space uniformity (FSU, refer to Sec. 3.2 for detailed definition) and OOD detection, MisD performance. According to the familiarity hypothesis \cite{dietterich2022familiarity}, the features are less activated for OOD samples from unknown classes than that for ID samples. Therefore, MisD is more difficult than OOD detection, and the FSU has different impact on those two tasks. In what follows, we provide more illustration based on Fig.~\ref{figure-A5}. Specifically,
\begin{figure}[h]
	\begin{center}
		\centerline{\includegraphics[width=\columnwidth]{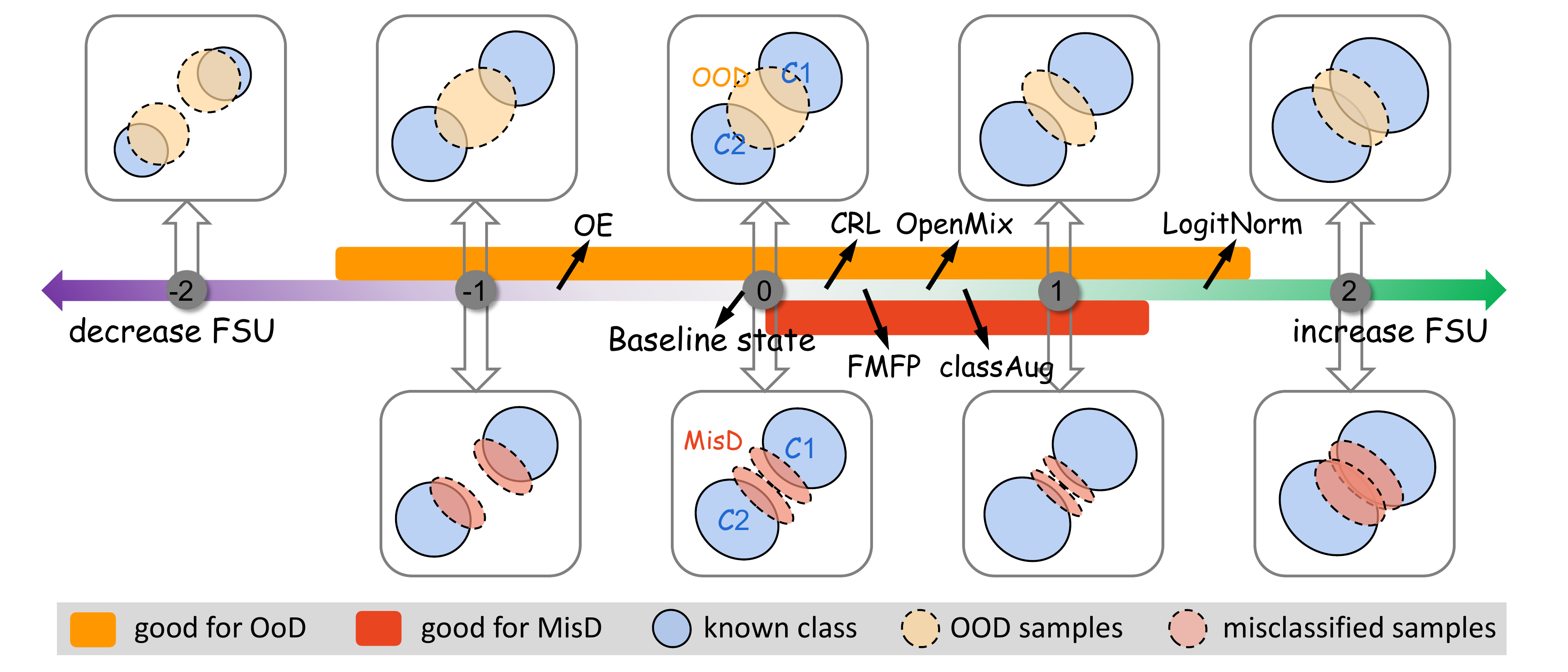}}
		\vskip -0.05 in
		\caption{Illustration of how the change of FSU affects the OOD detection and MisD performance.}
		\label{figure-A5}
	\end{center}
	\vskip -0.2 in
\end{figure}
\begin{itemize}
	\item For OOD detection, at the baseline state ({\ttfamily state 0}), the OOD distribution has some overlap with ID distribution. \ding{172}  When decreasing the FSU, the distribution of known classes is compressed and the overlap between OOD and ID samples could be reduced ({\ttfamily state -1}). However, when further decreasing the FSU, the ID distribution could be much over-compact and the model maps most of the OOD samples to the ID region ({\ttfamily state -2}), leading to worse OOD detection performance. \ding{173} When increasing the FSU, more OOD samples could be mapped to low density regions ({\ttfamily state 1}). However, when further increasing the FSU, the ID distribution would be under-activated ({\ttfamily state 2}), leading to worse separation between ID and OOD distribution.
	\item For MisD, compared with OOD samples, the misclassified samples are ID and closer to the correct samples. Therefore, MisD performance is more sensitive to the change of FSU. As a result, \ding{172} decreasing the FSU would easily lead to more overlap between correct and misclassified ID samples ({\ttfamily state -1}). To improve the separation, it more helpful to \ding{173} increase the FSU ({\ttfamily state 1}), making the features be less activated for misclassified samples. However, when further increasing the FSU, the distribution of correct samples would be under-activated ({\ttfamily state 2}), leading to worse separation between correct and wrong data.
\end{itemize}
\noindent In conclusion, both over-compact and over-dispersive feature distributions are harmful for OOD detection and MisD. To effectively detect OOD and misclassified samples, it is better to increase the FSU to a proper level. In Fig.~\ref{figure-A5}, the \textcolor{orange}{orange} region is good for OOD detection, while the \textcolor{red}{red} region is good for MisD. The common region between the orange and red is desirable for detecting OOD and misclassified samples in a unified manner. In addition, we compute the FSU of several representative methods (OE \cite{hendrycks2019deep}, MSP \cite{hendrycks2017baseline}, CRL \cite{MoonKSH20}, FMFP \cite{zhu2022rethinking}, OpenMix, classAug \cite{zhu2022learning}, LogitNorm \cite{wei2022logitnorm}) and mark the corresponding position in Fig.~\ref{figure-A5}. The effect of them is consistent with our analysis.

\subsection{Theoretical analysis: OpenMix increases the exposure of low density regions}
\noindent In standard training, with cross-entropy loss and one-hot label, there are few uncertain samples are mapped to low density regions.
An intuitive interpretation of the effectiveness of OpenMix is it increases the exposure of low density regions in feature space by synthesizing and learning the mixed samples. We provide a theoretical justification showing that our method can increase the sample density in the original low-density regions.

\setParDis
\noindent 
Suppose we have a known class consisting of samples drawn from probability density function $f(x)$, and an outlier distribution $f_{\text{ood}}(x)$ that is farther away from $f(x)$. By applying linear interpolation (\emph{i.e.}, Mixup) between ID distribution $f(x)$ and outlier distribution $f_{\text{ood}}(x)$, we can get a mixed set. Denote $f_{\text{mix}}(x)$ the bimodal distribution that represents the probability density function of mixed samples. With integration of $f(x)$ and $f_{\text{mix}}(x)$, the new data probability density function is denoted as $\bar f(x)=\frac{1}{2}\left(f(x)+f_{\text{mix}}(x)\right)$. The following theorem shows that the probability density on the subset $S=\{x||x^{\tau}v|>C, x\in\mathcal{R}^d\}$ is enlarged, in which $C$ is a sufficiently large constant and $v\in\mathcal{R}^d $ is the certain direction. For example, for the single dimensional case with variance $\sigma^2$, the density is guaranteed to be enlarged in the set $S'=\{x||x|>1.5\sigma,\mu=1\}$, which is exactly the low-density area for the Guassian distribution.

\begin{theorem}
	Let $f(x)$ and $f_{\text{mix}}(x)$ be the probability density functions defined as follows,
	$$
	f(x)=\frac{1}{(2\pi)^{d/2}|\Sigma|^{1/2}}\exp\left(-\frac{x^{\tau}\Sigma^{-1}x}{2}\right),
	$$
	and
	$$
	f_{\text{mix}}(x)=\frac{1}{2(2\pi)^{d/2}|\Sigma|^{1/2}}\exp\left(-\frac{(x-\mu)^{\tau}\Sigma^{-1}(x-\mu)}{2}\right)+\frac{1}{2(2\pi)^{d/2}|\Sigma|^{1/2}}\exp\left(-\frac{(x-\mu)^{\tau}\Sigma^{-1}(x-\mu)}{2}\right),
	$$
	where $x=(x_1,\cdots,x_d)\in\mathbb{R}^d$, $\mu\in\mathbb{R}^d$ and $\Sigma$ are the correspondingly mean vector and positive-definite covariance matrix.
	Assume that $\mu=\Sigma^{1/2}\bar\mu$, where $\|\bar u\|=1$ can be chosen arbitrary, then , it follows that 
	$$f(x)<\bar f(x), ~\text{for any} ~x\in S'=\{x||x^{\tau}v|>1.5, v= \Sigma^{-1/2}\bar\mu\}.$$
\end{theorem}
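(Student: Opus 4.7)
The plan is to reduce the multivariate inequality to a one-dimensional computation via whitening, and then verify the resulting elementary bound. First I note that since $\bar f(x)=\tfrac12\bigl(f(x)+f_{\text{mix}}(x)\bigr)$, the claim $f(x)<\bar f(x)$ is equivalent to $f(x)<f_{\text{mix}}(x)$, so only the two densities $f$ and $f_{\text{mix}}$ need to be compared. (I will interpret the displayed $f_{\text{mix}}$ as the symmetric mixture of Gaussians at $\pm\mu$, since both summands in the statement are printed identically, which is plainly a typographical slip; the bimodal interpretation is the only one consistent with the subsequent calculation and with the role of $f_{\text{mix}}$ as the push-forward of Mixup between the ID mode and a farther outlier mode.)

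Next I would whiten by the substitution $y=\Sigma^{-1/2}x$. Under this change of variables, $x^{\tau}\Sigma^{-1}x=\|y\|^2$, $(x\mp\mu)^{\tau}\Sigma^{-1}(x\mp\mu)=\|y\mp\bar\mu\|^2$, and the direction $v=\Sigma^{-1/2}\bar\mu$ appearing in $S'$ satisfies $x^{\tau}v=y^{\tau}\bar\mu$. The common normalization constant $(2\pi)^{-d/2}|\Sigma|^{-1/2}$ cancels from both sides of the target inequality, so the problem reduces to proving
\begin{equation*}
\exp\!\Bigl(-\tfrac12\|y\|^2\Bigr)<\tfrac12\exp\!\Bigl(-\tfrac12\|y-\bar\mu\|^2\Bigr)+\tfrac12\exp\!\Bigl(-\tfrac12\|y+\bar\mu\|^2\Bigr)
\end{equation*}
for every $y$ with $|y^{\tau}\bar\mu|>1.5$.

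Then I would use $\|\bar\mu\|=1$ to expand $\|y\pm\bar\mu\|^2=\|y\|^2\pm 2t+1$ where $t:=y^{\tau}\bar\mu$. Dividing through by $\exp(-\tfrac12\|y\|^2)$, the inequality collapses to the scalar condition $e^{-1/2}\cosh(t)>1$, i.e.\ $\cosh(t)>e^{1/2}$. Since $\cosh$ is even and strictly increasing on $[0,\infty)$, and $\cosh(1.5)=\tfrac12(e^{1.5}+e^{-1.5})\approx 2.35$ comfortably exceeds $e^{1/2}\approx 1.65$, the hypothesis $|t|=|x^{\tau}v|>1.5$ is more than sufficient to close the bound. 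The critical threshold is in fact $\operatorname{arccosh}(e^{1/2})\approx 1.09$, so $1.5$ leaves a clear margin.

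The only delicate step is the reconciliation of the typo and verifying that the whitening substitution aligns the ambient direction $v$ in $S'$ with the isotropic axis $\bar\mu$; beyond that, the proof is elementary algebra plus monotonicity of $\cosh$. Nothing analytically hard appears, and no auxiliary results beyond the standard Gaussian density formula are needed.
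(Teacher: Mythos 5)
Your proof is correct and follows essentially the same route as the paper's: both reduce the claim to $f_{\text{mix}}(x)>f(x)$, use $\mu=\Sigma^{1/2}\bar\mu$ with $\|\bar\mu\|=1$ to collapse the comparison to the scalar inequality $\cosh(x^{\tau}v)>e^{1/2}$, and verify it at the threshold $1.5$ (you correctly identify the typo in $f_{\text{mix}}$ and the sharp threshold $\operatorname{arccosh}(e^{1/2})\approx 1.09$, while the paper reaches the same expression by direct manipulation of the exponents rather than an explicit whitening substitution). No gaps.
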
	
\begin{proof}
	Denote $g(x)=\bar f(x)-f(x)$, we have
	$$
	\begin{aligned}
	g(x)=&\frac{1}{2}\left(f(x)+f_{\text{mix}}(x)\right)-f(x)=\frac{1}{2}\left(f_{\text{mix}}(x)-f(x)\right)\\
	=&\frac{1}{4(2\pi)^{d/2}|\Sigma|^{1/2}}\exp\left(-\frac{x^{\tau}\Sigma^{-1}x}{2}\right)\left[\exp\left(-\frac{\mu^{\tau}\Sigma^{-1}\mu}{2}\right)\left(\exp\left(x\Sigma^{-1}\mu\right)+\exp\left(-x\Sigma^{-1}\mu\right)\right)-2\right]	\\
	\end{aligned}    
	$$
	In what follows, we show that $g(x)>0$ on the region $x\in S'$.
	Firstly, it is trivial to see that $g(0)=2\exp\left(-\frac{\mu^{\tau}\Sigma^{-1}\mu}{2}\right)-2<0$.
	To analyze the property of $g(x)$, we need to analyze the following function:
	$$\begin{aligned}
	h(x)=&\exp\left(x^{\tau}\Sigma^{-1}\mu\right)+\exp\left(-x^{\tau}\Sigma^{-1}\mu\right)-2\exp\left(\frac{\mu^{\tau}\Sigma^{-1}\mu}{2}\right)\\
	=&\exp\left(x^{\tau}\Sigma^{-1/2}\bar\mu\right)+\exp\left(-x^{\tau}\Sigma^{-1/2}\bar\mu\right)-2\exp\left(\frac{\bar\mu^{\tau}\bar\mu}{2}\right)\\
	=&\exp\left(x^{\tau}\Sigma^{-1/2}\bar\mu\right)+\exp\left(-x^{\tau}\Sigma^{-1/2}\bar\mu\right)-2\exp\left(\frac{1}{2}\right)\\
	\end{aligned}$$
	Noticing that $\exp(x)+\exp(-x)$ is an even function and it is increasing with respect to $|x|$, thus for $h(x)$, there exists a positive constant $m$ such that, $h(x)>0$ when $|x^{\tau}\Sigma^{-1/2}\bar\mu|\geq m$. We can see that $\exp\left(1.5\right)+\exp\left(-1.5\right)-2\exp\left(\frac{1}{2}\right)>0$, which means $m\geq1.5$. That means $g(x)>0$ when $|x^{\tau}\Sigma^{-1/2}\bar\mu|\geq1.5 $, thus we complete the proof.
\end{proof}

{\small
	\bibliographystyle{ieee_fullname}
	\bibliography{reference}

\begin{thebibliography}{10}\itemsep=-1pt

\bibitem{bernhardtfailure}
M{\'e}lanie Bernhardt, Fabio De~Sousa Ribeiro, and Ben Glocker.
\newblock Failure detection in medical image classification: A reality check
  and benchmarking testbed.
\newblock {\em Transactions on Machine Learning Research}.

\bibitem{bitterwolf2022breaking}
Julian Bitterwolf, Alexander Meinke, Maximilian Augustin, and Matthias Hein.
\newblock Breaking down out-of-distribution detection: Many methods based on
  ood training data estimate a combination of the same core quantities.
\newblock In {\em ICML}, pages 2041--2074, 2022.

\bibitem{cao2019learning}
Kaidi Cao, Colin Wei, Adrien Gaidon, Nikos Arechiga, and Tengyu Ma.
\newblock Learning imbalanced datasets with label-distribution-aware margin
  loss.
\newblock In {\em NeurIPS}, 2019.

\bibitem{cao2022deep}
Senqi Cao and Zhongfei Zhang.
\newblock Deep hybrid models for out-of-distribution detection.
\newblock In {\em Proceedings of the IEEE/CVF Conference on Computer Vision and
  Pattern Recognition}, pages 4733--4743, 2022.

\bibitem{cen2023devil}
Jun Cen, Di Luan, Shiwei Zhang, Yixuan Pei, Yingya Zhang, Deli Zhao, Shaojie
  Shen, and Qifeng Chen.
\newblock The devil is in the wrongly-classified samples: Towards unified
  open-set recognition.
\newblock In {\em ICLR}, 2023.

\bibitem{cheng2023average}
Zhen Cheng, Fei Zhu, Xu-Yao Zhang, and Cheng-Lin Liu.
\newblock Average of pruning: Improving performance and stability of
  out-of-distribution detection.
\newblock {\em arXiv preprint arXiv:2303.01201}, 2023.

\bibitem{chow1970optimum}
C Chow.
\newblock On optimum recognition error and reject tradeoff.
\newblock {\em IEEE Transactions on information theory}, 16(1):41--46, 1970.

\bibitem{corbiere2019addressing}
Charles Corbi{\`e}re, Nicolas Thome, Avner Bar-Hen, Matthieu Cord, and Patrick
  P{\'e}rez.
\newblock Addressing failure prediction by learning model confidence.
\newblock In {\em NeurIPS}, pages 2898--2909, 2019.

\bibitem{corbiere2021confidence}
Charles Corbi{\`e}re, Nicolas Thome, Antoine Saporta, Tuan-Hung Vu, Matthieu
  Cord, and Patrick Perez.
\newblock Confidence estimation via auxiliary models.
\newblock {\em IEEE Transactions on Pattern Analysis and Machine Intelligence},
  2021.

\bibitem{davis2006relationship}
Jesse Davis and Mark Goadrich.
\newblock The relationship between precision-recall and roc curves.
\newblock In {\em ICML}, pages 233--240, 2006.

\bibitem{de2005working}
Wim De~Neys, Walter Schaeken, and G{\'e}ry d'Ydewalle.
\newblock Working memory and everyday conditional reasoning: Retrieval and
  inhibition of stored counterexamples.
\newblock {\em Thinking \& Reasoning}, 11(4):349--381, 2005.

\bibitem{deng2009imagenet}
Jia Deng, Wei Dong, Richard Socher, Li-Jia Li, Kai Li, and Li Fei-Fei.
\newblock Imagenet: A large-scale hierarchical image database.
\newblock In {\em CVPR}, pages 248--255, 2009.

\bibitem{dietterich2022familiarity}
Thomas~G Dietterich and Alex Guyer.
\newblock The familiarity hypothesis: Explaining the behavior of deep open set
  methods.
\newblock {\em Pattern Recognition}, 132:108931, 2022.

\bibitem{dong2022neural}
Xin Dong, Junfeng Guo, Ang Li, Wei-Te Ting, Cong Liu, and HT Kung.
\newblock Neural mean discrepancy for efficient out-of-distribution detection.
\newblock In {\em CVPR}, pages 19217--19227, 2022.

\bibitem{esteva2017dermatologist}
Andre Esteva, Brett Kuprel, Roberto~A Novoa, Justin Ko, Susan~M Swetter,
  Helen~M Blau, and Sebastian Thrun.
\newblock Dermatologist-level classification of skin cancer with deep neural
  networks.
\newblock {\em nature}, 542(7639):115--118, 2017.

\bibitem{fort2021exploring}
Stanislav Fort, Jie Ren, and Balaji Lakshminarayanan.
\newblock Exploring the limits of out-of-distribution detection.
\newblock In {\em NeurIPS}, volume~34, pages 7068--7081, 2021.

\bibitem{GalG16}
Yarin Gal and Zoubin Ghahramani.
\newblock Dropout as a bayesian approximation: Representing model uncertainty
  in deep learning.
\newblock In {\em ICML}, volume~48, pages 1050--1059, 2016.

\bibitem{GeifmanE17}
Yonatan Geifman and Ran El{-}Yaniv.
\newblock Selective classification for deep neural networks.
\newblock In {\em NeurIPS}, pages 4878--4887, 2017.

\bibitem{geirhos2020shortcut}
Robert Geirhos, J{\"o}rn-Henrik Jacobsen, Claudio Michaelis, Richard Zemel,
  Wieland Brendel, Matthias Bethge, and Felix~A Wichmann.
\newblock Shortcut learning in deep neural networks.
\newblock {\em Nature Machine Intelligence}, 2(11):665--673, 2020.

\bibitem{granese2021doctor}
Federica Granese, Marco Romanelli, Daniele Gorla, Catuscia Palamidessi, and
  Pablo Piantanida.
\newblock Doctor: A simple method for detecting misclassification errors.
\newblock In {\em NeurIPS}, volume~34, pages 5669--5681, 2021.

\bibitem{guo2017calibration}
Chuan Guo, Geoff Pleiss, Yu Sun, and Kilian~Q Weinberger.
\newblock On calibration of modern neural networks.
\newblock In {\em ICML}, pages 1321--1330, 2017.

\bibitem{he2016deep}
Kaiming He, Xiangyu Zhang, Shaoqing Ren, and Jian Sun.
\newblock Deep residual learning for image recognition.
\newblock In {\em CVPR}, pages 770--778, 2016.

\bibitem{HeZRS16}
Kaiming He, Xiangyu Zhang, Shaoqing Ren, and Jian Sun.
\newblock Identity mappings in deep residual networks.
\newblock In {\em ECCV}, pages 630--645, 2016.

\bibitem{hendrycks2019anomalyseg}
Dan Hendrycks, Steven Basart, Mantas Mazeika, Andy Zou, Joe Kwon, Mohammadreza
  Mostajabi, Jacob Steinhardt, and Dawn Song.
\newblock Scaling out-of-distribution detection for real-world settings.
\newblock 2022.

\bibitem{HendrycksD19}
Dan Hendrycks and Thomas~G. Dietterich.
\newblock Benchmarking neural network robustness to common corruptions and
  perturbations.
\newblock In {\em ICLR}, 2019.

\bibitem{hendrycks2017baseline}
Dan Hendrycks and Kevin Gimpel.
\newblock A baseline for detecting misclassified and out-of-distribution
  examples in neural networks.
\newblock In {\em ICLR}, 2017.

\bibitem{hendrycks2019deep}
Dan Hendrycks, Mantas Mazeika, and Thomas~G Dietterich.
\newblock Deep anomaly detection with outlier exposure.
\newblock In {\em ICLR}, 2019.

\bibitem{HuangLMW17}
Gao Huang, Zhuang Liu, Laurens van~der Maaten, and Kilian~Q. Weinberger.
\newblock Densely connected convolutional networks.
\newblock In {\em CVPR}, pages 2261--2269, 2017.

\bibitem{jaeger2022call}
Paul~F Jaeger, Carsten~T L{\"u}th, Lukas Klein, and Till~J Bungert.
\newblock A call to reflect on evaluation practices for failure detection in
  image classification.
\newblock In {\em ICLR}, 2023.

\bibitem{janai2017computer}
Joel Janai, Fatma G{\"u}ney, Aseem Behl, Andreas Geiger, et~al.
\newblock Computer vision for autonomous vehicles: Problems, datasets and state
  of the art.
\newblock {\em Foundations and Trends{\textregistered} in Computer Graphics and
  Vision}, 12(1--3):1--308, 2020.

\bibitem{Jiang2018ToTO}
Heinrich Jiang, Been Kim, and Maya~R. Gupta.
\newblock To trust or not to trust a classifier.
\newblock In {\em NeurIPS}, 2018.

\bibitem{johnson2010mental}
Philip~N Johnson-Laird.
\newblock Mental models and human reasoning.
\newblock {\em Proceedings of the National Academy of Sciences},
  107(43):18243--18250, 2010.

\bibitem{kim2021unified}
Jihyo Kim, Jiin Koo, and Sangheum Hwang.
\newblock A unified benchmark for the unknown detection capability of deep
  neural networks.
\newblock {\em arXiv preprint arXiv:2112.00337}, 2021.

\bibitem{KiniPOT21}
Ganesh~Ramachandra Kini, Orestis Paraskevas, Samet Oymak, and Christos
  Thrampoulidis.
\newblock Label-imbalanced and group-sensitive classification under
  overparameterization.
\newblock In {\em NeurIPS}, pages 18970--18983, 2021.

\bibitem{krizhevsky2009learning}
Alex Krizhevsky, Geoffrey Hinton, et~al.
\newblock Learning multiple layers of features from tiny images.
\newblock Technical report, Citeseer, 2009.

\bibitem{le2021poodle}
Duong Le, Khoi~Duc Nguyen, Khoi Nguyen, Quoc-Huy Tran, Rang Nguyen, and
  Binh-Son Hua.
\newblock Poodle: Improving few-shot learning via penalizing
  out-of-distribution samples.
\newblock {\em NeurIPS}, 34:23942--23955, 2021.

\bibitem{lee2018simple}
Kimin Lee, Kibok Lee, Honglak Lee, and Jinwoo Shin.
\newblock A simple unified framework for detecting out-of-distribution samples
  and adversarial attacks.
\newblock In {\em NeurIPS}, pages 7167--7177, 2018.

\bibitem{lee2020removing}
Saehyung Lee, Changhwa Park, Hyungyu Lee, Jihun Yi, Jonghyun Lee, and Sungroh
  Yoon.
\newblock Removing undesirable feature contributions using out-of-distribution
  data.
\newblock In {\em ICLR}, 2020.

\bibitem{Li2022CVPR}
Bolian Li, Zongbo Han, Haining Li, Huazhu Fu, and Changqing Zhang.
\newblock Trustworthy long-tailed classification.
\newblock In {\em CVPR}, pages 6970--6979, June 2022.

\bibitem{LiangLS18}
Shiyu Liang, Yixuan Li, and R. Srikant.
\newblock Enhancing the reliability of out-of-distribution image detection in
  neural networks.
\newblock In {\em ICLR}, 2018.

\bibitem{liu2020energy}
Weitang Liu, Xiaoyun Wang, John Owens, and Yixuan Li.
\newblock Energy-based out-of-distribution detection.
\newblock {\em NeurIPS}, 33:21464--21475, 2020.

\bibitem{liznerski2022exposing}
Philipp Liznerski, Lukas Ruff, Robert~A Vandermeulen, Billy~Joe Franks,
  Klaus~Robert Muller, and Marius Kloft.
\newblock Exposing outlier exposure: What can be learned from few, one, and
  zero outlier images.
\newblock {\em Transactions on Machine Learning Research}.

\bibitem{luo2021learning}
Yan Luo, Yongkang Wong, Mohan~S Kankanhalli, and Qi Zhao.
\newblock Learning to predict trustworthiness with steep slope loss.
\newblock {\em NeurIPS}, 2021.

\bibitem{MenonJRJVK21}
Aditya~Krishna Menon, Sadeep Jayasumana, Ankit~Singh Rawat, Himanshu Jain,
  Andreas Veit, and Sanjiv Kumar.
\newblock Long-tail learning via logit adjustment.
\newblock In {\em ICLR}, 2021.

\bibitem{MoonKSH20}
Jooyoung Moon, Jihyo Kim, Younghak Shin, and Sangheum Hwang.
\newblock Confidence-aware learning for deep neural networks.
\newblock In {\em ICML}, pages 7034--7044, 2020.

\bibitem{MukhotiKSGTD20}
Jishnu Mukhoti, Viveka Kulharia, Amartya Sanyal, Stuart Golodetz, Philip H.~S.
  Torr, and Puneet~K. Dokania.
\newblock Calibrating deep neural networks using focal loss.
\newblock In {\em NeurIPS}, 2020.

\bibitem{MullerKH19}
Rafael M{\"{u}}ller, Simon Kornblith, and Geoffrey~E. Hinton.
\newblock When does label smoothing help?
\newblock In {\em NeurIPS}, pages 4696--4705, 2019.

\bibitem{nadeem2009accuracy}
Malik Sajjad~Ahmed Nadeem, Jean-Daniel Zucker, and Blaise Hanczar.
\newblock Accuracy-rejection curves (arcs) for comparing classification methods
  with a reject option.
\newblock In {\em Machine Learning in Systems Biology}, pages 65--81, 2009.

\bibitem{park2021task}
Dongmin Park, Hwanjun Song, MinSeok Kim, and Jae-Gil Lee.
\newblock Task-agnostic undesirable feature deactivation using
  out-of-distribution data.
\newblock {\em NeurIPS}, 34:4040--4052, 2021.

\bibitem{pinto2022regmixup}
Francesco Pinto, Harry Yang, Ser-Nam Lim, Philip~HS Torr, and Puneet~K Dokania.
\newblock Regmixup: Mixup as a regularizer can surprisingly improve accuracy
  and out distribution robustness.
\newblock In {\em NeurIPS}, 2022.

\bibitem{roth2020revisiting}
Karsten Roth, Timo Milbich, Samarth Sinha, Prateek Gupta, Björn Ommer, and
  Joseph~Paul Cohen.
\newblock Revisiting training strategies and generalization performance in deep
  metric learning.
\newblock In {\em ICML}, 2020.

\bibitem{SimonyanZ14a}
Karen Simonyan and Andrew Zisserman.
\newblock Very deep convolutional networks for large-scale image recognition.
\newblock In {\em ICLR}, 2015.

\bibitem{techapanurak2020hyperparameter}
Engkarat Techapanurak, Masanori Suganuma, and Takayuki Okatani.
\newblock Hyperparameter-free out-of-distribution detection using cosine
  similarity.
\newblock In {\em ACCV}, 2020.

\bibitem{thulasidasan2019mixup}
Sunil Thulasidasan, Gopinath Chennupati, Jeff Bilmes, Tanmoy Bhattacharya, and
  Sarah Michalak.
\newblock On mixup training: Improved calibration and predictive uncertainty
  for deep neural networks.
\newblock In {\em NeurIPS}, pages 13888--13899, 2019.

\bibitem{vandermaaten08a}
Laurens van~der Maaten and Geoffrey Hinton.
\newblock Visualizing data using t-sne.
\newblock {\em Journal of Machine Learning Research}, pages 2579--2605, 2008.

\bibitem{verma2019manifold}
Vikas Verma, Alex Lamb, Christopher Beckham, Amir Najafi, Ioannis Mitliagkas,
  David Lopez-Paz, and Yoshua Bengio.
\newblock Manifold mixup: Better representations by interpolating hidden
  states.
\newblock In {\em International conference on machine learning}, pages
  6438--6447. PMLR, 2019.

\bibitem{verschueren2005everyday}
Niki Verschueren, Walter Schaeken, and Gery d’Ydewalle.
\newblock Everyday conditional reasoning: A working memory-dependent tradeoff
  between counterexample and likelihood use.
\newblock {\em Memory \& Cognition}, 33(1):107--119, 2005.

\bibitem{wang2022partial}
Haotao Wang, Aston Zhang, Yi Zhu, Shuai Zheng, Mu Li, Alex~J Smola, and
  Zhangyang Wang.
\newblock Partial and asymmetric contrastive learning for out-of-distribution
  detection in long-tailed recognition.
\newblock In {\em ICML}, pages 23446--23458. PMLR, 2022.

\bibitem{wei2021open}
Hongxin Wei, Lue Tao, Renchunzi Xie, and Bo An.
\newblock Open-set label noise can improve robustness against inherent label
  noise.
\newblock {\em NeurIPS}, 34:7978--7992, 2021.

\bibitem{wei2022open}
Hongxin Wei, Lue Tao, Renchunzi Xie, Lei Feng, and Bo An.
\newblock Open-sampling: Exploring out-of-distribution data for re-balancing
  long-tailed datasets.
\newblock In {\em ICML}, pages 23615--23630. PMLR, 2022.

\bibitem{wei2022logitnorm}
Hongxin Wei, Renchunzi Xie, Hao Cheng, Lei Feng, Bo An, and Yixuan Li.
\newblock Mitigating neural network overconfidence with logit normalization.
\newblock In {\em ICML}, 2022.

\bibitem{yun2019cutmix}
Sangdoo Yun, Dongyoon Han, Seong~Joon Oh, Sanghyuk Chun, Junsuk Choe, and
  Youngjoon Yoo.
\newblock Cutmix: Regularization strategy to train strong classifiers with
  localizable features.
\newblock In {\em CVPR}, pages 6023--6032, 2019.

\bibitem{zagoruyko2016wide}
Sergey Zagoruyko and Nikos Komodakis.
\newblock Wide residual networks.
\newblock In {\em BMVC}, 2016.

\bibitem{zhang2018mixup}
Hongyi Zhang, Moustapha Cisse, Yann~N Dauphin, and David Lopez-Paz.
\newblock Mixup: Beyond empirical risk minimization.
\newblock In {\em ICLR}, 2018.

\bibitem{zhang2020towards}
Xu-Yao Zhang, Cheng-Lin Liu, and Ching~Y Suen.
\newblock Towards robust pattern recognition: A review.
\newblock {\em Proceedings of the IEEE}, 108(6):894--922, 2020.

\bibitem{zhang2023survey}
Xu-Yao Zhang, Guo-Sen Xie, Xiuli Li, Tao Mei, and Cheng-Lin Liu.
\newblock A survey on learning to reject.
\newblock {\em Proceedings of the IEEE}, 111(2):185--215, 2023.

\bibitem{zhu2022rethinking}
Fei Zhu, Zhen Cheng, Xu-Yao Zhang, and Cheng-Lin Liu.
\newblock Rethinking confidence calibration for failure prediction.
\newblock In {\em ECCV}, pages 518--536. Springer, 2022.

\bibitem{zhu2022learning}
Fei Zhu, Xu-Yao Zhang, Rui-Qi Wang, and Cheng-Lin Liu.
\newblock Learning by seeing more classes.
\newblock {\em IEEE Transactions on Pattern Analysis and Machine Intelligence},
  2023.

\end{thebibliography}
}
\end{document}